\newtheorem{proposition}{Proposition}
\newtheorem{corollary}{Corollary}
\journal{Mechatronics}
\newcommand{\R}{\mathbb R}
\newdefinition{rmk}{Remark} 
\newproof{pf}{Proof} 
\newproof{pot}{Proof of Theorem \ref{thm2}}
\begin{document}

\begin{frontmatter}
\title{An Iterative Algorithm to Symbolically Derive Generalized n-Trailer Vehicle Kinematics}

\author[label1,label3]{Yuvraj Singh\corref{cor1}}\ead{singh.1250@osu.edu} 
\author[label2,label3]{Adithya Jayakumar} 
\author[label1,label3]{Giorgio Rizzoni} 

\cortext[cor1]{Corresponding author}

\affiliation[label1]{organization={Department of Mechanical and Aerospace Engineering, The Ohio State University},
            addressline={201 W. 19th Avenue}, 
            city={Columbus},
            postcode={43210}, 
            state={OH},
            country={USA}}
\affiliation[label2]{organization={Department of Engineering Education, The Ohio State University},
            addressline={174 W. 18th Ave}, 
            city={Columbus},
            postcode={43210}, 
            state={OH},
            country={USA}}
\affiliation[label3]{organization={The Ohio State University Center for Automotive Research},
            addressline={930 Kinnear Road}, 
            city={Columbus},
            postcode={43212}, 
            state={OH},
            country={USA}}

\begin{abstract}
Articulated multi-axle vehicles are interesting from a control-theoretic perspective due to their peculiar kinematic offtracking characteristics, instability modes, and singularities. 
Holonomic and nonholonomic constraints affecting the kinematic behavior is investigated in order to develop control-oriented kinematic models representative of these peculiarities.
Then, the structure of these constraints is exploited to develop an iterative algorithm to symbolically derive yaw-plane kinematic models of generalized $n$-trailer articulated vehicles with an arbitrary number of multi-axle vehicle units. 
A formal proof is provided for the maximum number of kinematic controls admissible to a large-scale generalized articulated vehicle system, which leads to a generalized Ackermann steering law for $n$-trailer systems. 
Moreover, kinematic data collected from a test vehicle is used to validate the kinematic models and, to understand the rearward yaw rate amplification behavior of the vehicle pulling multiple simulated trailers.

\end{abstract}




\begin{keyword}
Mobile robots and vehicles \sep Articulated vehicles \sep Nonholonomic robots \sep Kinematics \sep Lie Group methods
\end{keyword}
            
\end{frontmatter}

\section{Introduction}
Multi-body articulated vehicle systems or tractor-trailer vehicles consist of one or more vehicle units (trailers, dollies) pulled by a tractor (or cab) unit. Articulated vehicles provide a smaller turning radius and improved maneuverability as compared to a single-unit vehicle of equivalent wheelbase. However, the improvements in maneuverability and logistical efficiency come at the cost of introducing complicated kinematic and dynamic behavior (refer Figure \ref{fig:tractor_trailer_instabilities}) such as offtracking, jackknifing, rearward amplification, and higher rollover propensity 
\cite{trigell_truck_trailer_dynamics, kurtz_anderson_survey, vlk_lateral_dynamics, jindra_1966, mikulcik_1971}. 
High-fidelity models for articulated vehicle dynamics are widely available in the literature \cite{truck9dof,genta} as well as in commercial vehicle simulation packages 
\cite{trucksim,truckmaker,vdbs}
to analyze the dynamic behavior of such vehicles. Such models provide test bench for design and development of vehicle systems like suspension, powertrain, braking systems, steering, vehicle dynamics controllers, etc. 
However, being too complicated, these models are not suitable for control synthesis of motion planning and tracking controllers for an automated driving system. 
Instead, simplified kinematic reduction models have been used for developing motion planning algorithms and to establish controllability properties of these dynamic systems \cite{kinematic_reduction_controllability_1}.

Two types of tractor-trailer systems are commonly discussed in the literature related to nonholonomic path planning. Standard $n$-trailer systems consist of $n$ trailers being pulled by a tractor unit where all trailers are hitched on the rear axle of the vehicle unit in front of them. General $n$-trailers also consist of $n$ trailers pulled by a tractor unit, but the trailers can have off-axle hitching.
Controllability for standard $n$-trailer has been proven by Laumond \cite{laumond_controllability}. Path planning approaches for the standard $n$-trailer system have been extensively explored in the literature using chained form transformations 
\cite{chained_form_1,chained_form_sinusoids_1,chained_form_sinusoids_2} 
and by exploiting the differentially flat structure of the tractor-trailer planar kinematic models \cite{flatness_motion_planning_1}. However, most tractor-trailer systems in use are not standard $n$-trailer systems since they have off-axle hitching. The general $n$-trailer systems and their properties are described in \cite{general_n_trailer_properties}, and planning algorithms for them have been explored in \cite{flatness_motion_planning_2}. Due to off-axle hitching, the general $n$-trailer system (for $n>1$) loses the nice properties of differential flatness and the chained form that standard $n$-trailer systems exhibit \cite{flatness_motion_planning_2}. While the general $n$-trailer is controllable \cite{general_n_trailer_properties}, it has singular configurations when consecutive trailers become orthogonally oriented, which causes the Lie brackets of vector fields associated with vehicle kinematics to not form a full-rank distribution \cite{general_n_trailer_properties}. 

\begin{figure}
    \centering
    \includegraphics[width=\linewidth]{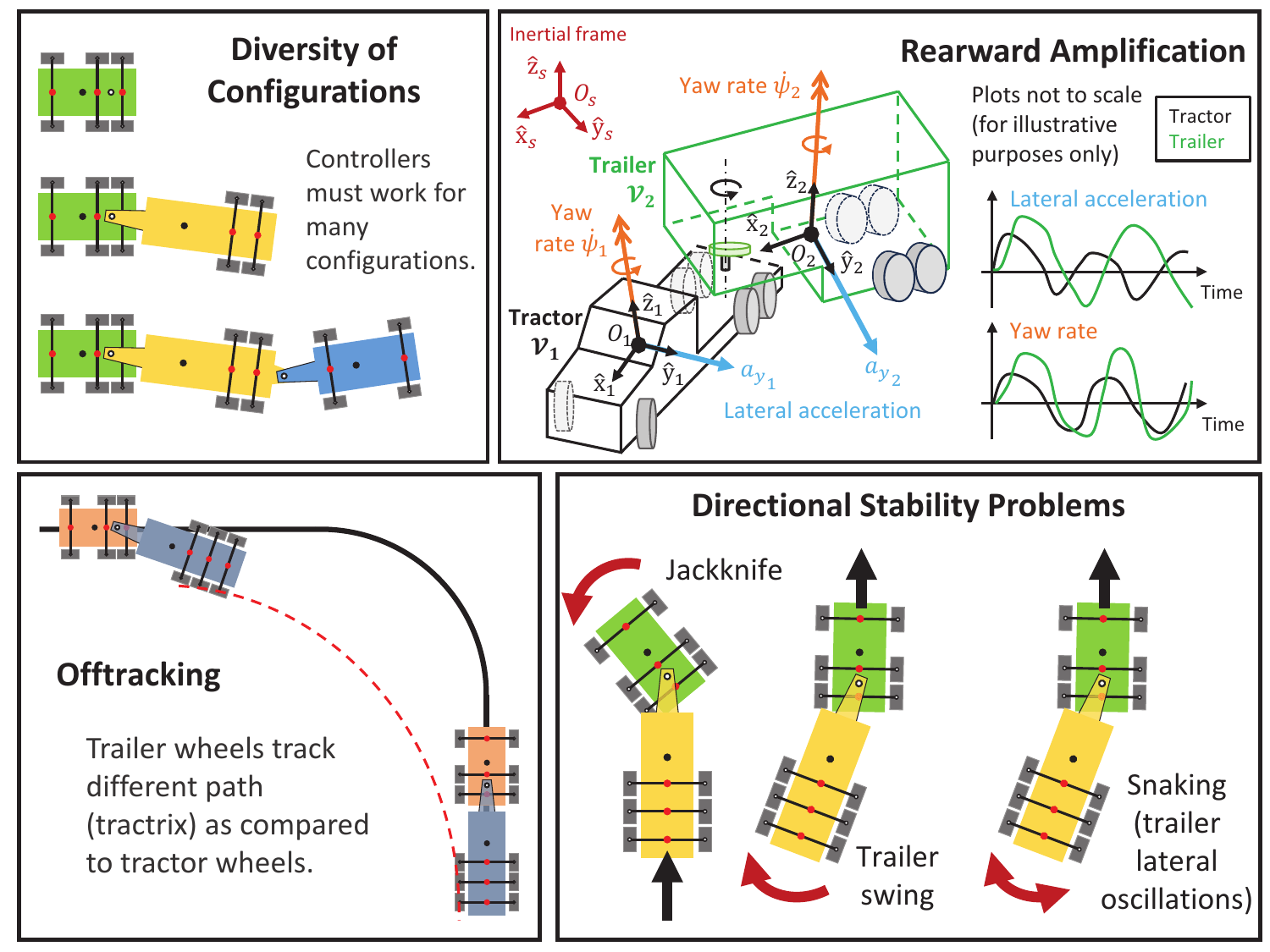}
    \caption{Challenges involved in control of articulated vehicle systems}
    \label{fig:tractor_trailer_instabilities}
\end{figure}

In order to understand the control-theoretic properties of articulated vehicle systems, nonlinear control-oriented kinematic models for generalized $n$-trailer articulated vehicle systems are developed in this paper. 
Throughout this paper, the term ``\textit{generalized articulated vehicle system}" means a general $n$-trailer vehicle system with an arbitrary number of vehicle units with revolute joint hitches connecting adjacent vehicle units, where each vehicle unit has an arbitrary number of axles. 


While dynamic models for tractor-trailer vehicles have been widely explored 
\cite{deBruin_dynamic_model,genta,truck9dof}, and have been used for developing controllers \cite{kim_control_all_steered,yawrate_rwa_mpc}
, systematic approaches for kinematic modeling have been less explored.
Kinematic reductions of dynamic models are of immense importance for deriving control-theoretic notions such as controllability \cite{kinematic_reduction_controllability_1}, an important property for trajectory and motion planning. Kinematic models have been extensively used for trajectory planning \cite{oliviera} as well as path tracking \cite{astolfi_path_tracking}.

Unlike currently available iterative kinematic modeling methods found in the literature \cite{michalek,Orosco_modeling_feedback_linearization_n_trailer,tilbury_multisteering,michalek}, no assumptions have been made about the geometry of wheels and hitches in this paper.
%
%
As a result, the modeling procedure described in this paper is capable of representing the kinematics of a large diversity of vehicle configurations.
From a practical perspective, a diversity of configurations is a feature of articulated commercial vehicle systems commonly used for goods transportation. 
For instance, a tractor unit may tow more than one trailer in a highway driving scenario. In another scenario, the same tractor unit may be used for reversing a single trailer in a docking station. Hence, vehicle models used for developing automated driving algorithms for such vehicles must be robust to changes in the configuration of the system. 

\subsection{Contributions of this paper}
This paper has the following contributions:
\begin{enumerate}
    \item A modular iterative approach to derive Jacobian velocity kinematics for general $n$-trailer articulated vehicles applicable to vehicles with on-axle hitching, off-axle hitching (semitrailer, full trailer), laterally offset hitches, multiple axles, and exotic wheel geometries. Modular approaches to kinematic modeling of articulated buses under various assumptions related to vehicle and hitch geometry have been developed by Michalek et al. \cite{michalek}. However, in this paper, we do not make assumption on the vehicle's configuration. 
    \item 
    The structure of the nonholonomic constraints is utilized to develop a multi-axle $n$-trailer generalization of the extended Ackermann steering rule \cite{genta}. While, an extended Ackermann steering rule for a vehicle with trailer is found in the literature \cite{genta}, a general rule for multi-axle $n$-trailer vehicles has not been developed. 
    \item Despite being one of the most critical safety concerns for articulated vehicle control, Rearward amplification behavior has not been extensively explored for $n$-trailer systems in the literature. Moreover, rearward amplification is an important concern for vehicle dynamic/kinetic modeling \cite{ahmadian_rwa}. However this notion has not been explored for kinematic models which are predominantly used for control design for motion planning. Hence, an investigation of rearward yaw rate amplification of an underactuated vehicle towing multiple simulated trailers using real-world driver input data collected on an experimental single-unit passenger vehicle.
    
    
\end{enumerate}

\section{Kinematic Constraints on an Articulated Vehicle}
An articulated vehicle system moving in a plane can be modeled as a collection of rigid bodies, with consecutive chassis units connected by revolute joints, where each chassis unit has several wheels (assumed to be rigid) mounted on it.
\subsection{Preliminaries: Rigid Body Motion}

The rotation transformation using the Special Orthogonal $\mathrm{SO}(n)$ Lie group and homogeneous transformation using the Special Euclidean $\mathrm{SE}(n)$ Lie group are the fundamental transformations that are used to define the kinematic relationships in a robotic system composed of rigid bodies 
\footnote{
For more details on these Lie groups and their applications in robot kinematics development, interested readers shall refer to \cite{lynch_park_robotics}.
}.
The $\mathrm{SO}(n)$ group represents rotation transformations and is formally defined as the set of all $n \times n$ real matrices with determinant one, as defined in equation \eqref{eq:SOn_definition}. The $\mathrm{SE}(n)$ group represents rigid body motions, that is, both translation and rotation transformations, and is defined formally in equation \eqref{eq:SEn_definition}.
\begin{equation}
    \mathrm{SO}(n) = \left\{ R \in \R^{n \times n} \text{ s.t. } R^\top R = I_n \, , \ \mathrm{det}R = 1 \right\}
    \label{eq:SOn_definition}
\end{equation}
\begin{equation}
    \mathrm{SE}(n) = \left\{ T := \left( \begin{array}{c|c}
                            R           & \textbf{p} \\ \hline
                            \textbf{0}  & 1
                        \end{array} \right)
    \text{ s.t. } R \in \mathrm{SO}(n) \, , \ \textbf{p} \in \R^n \right\}
    \label{eq:SEn_definition}
\end{equation}

\subsubsection[Planar Rotations: \mathrm{SO}(2) Group]{Planar Rotations: $\mathrm{SO}(2)$ Group}
A vector $\begin{bmatrix} a & b \end{bmatrix}^\top$ is rotated by an angle $\psi$ about the normal to the plane ($\hat{\mathrm{z}}$-axis). The transformed vector (after rotation) $\begin{bmatrix} a_r & b_r \end{bmatrix}^\top$ is given by equation \eqref{eq:SO2_transform}
\begin{equation}
    \begin{pmatrix} a_r \\ b_r \end{pmatrix} = 
    R(\psi) \begin{pmatrix} a \\ b \end{pmatrix}
    , \ R(\psi) := \begin{pmatrix}
        \cos{\psi} & -\sin{\psi} \\
        \sin{\psi} &  \cos{\psi} 
    \end{pmatrix} \in \mathrm{SO}(2)
        \label{eq:SO2_transform}
\end{equation}

The following results related to the $R(\psi) \in \mathrm{SO}(2)$ rotation matrices can be derived using equation \eqref{eq:SO2_transform}: 
%
inverse (equation \eqref{eq:SO2_inverse}), time derivative (equation \eqref{eq:SO2_timederivative}), and a sequence of planar rotations by angles $\psi_1$ and $\psi_2$ (equation \eqref{eq:SO2_sequence}).
\begin{equation}
    R(\psi)^{-1} = R(\psi)^\top = R(-\psi)
    \label{eq:SO2_inverse}
\end{equation}
\begin{equation}
    \dv{t} R(\psi(t)) = R\left(\psi(t) + \frac{\pi}{2}\right) \dot{\psi}(t)
    \label{eq:SO2_timederivative}
\end{equation}
\begin{equation}
    R(\psi_2)R(\psi_1) = R(\psi_1 + \psi_2)
    \label{eq:SO2_sequence}
\end{equation}

\subsubsection[Planar Rigid Motions: \mathrm{SE}(2) Group]{Planar Rigid Motions: $\mathrm{SE}(2)$ Group} 
Consider a point $A$ located on a rigid body $\mathcal{V}_i$ having coordinates $(a_1,a_2)$ with respect to body frame $\{i\}$.
The frame $\{i\}$ is located at coordinates $\mathrm{p}_i := (x_i,y_i) \in \mathbb{R}^2$ and is aligned at an angle $\psi_i$ with respect to inertial frame $\{s\}$. 
The inertial frame coordinates $(x_a,y_a)$ of the point $A$ are given by equation \eqref{eq:SE2_transform}.
\begin{align}
    \begin{pmatrix} x_a & y_a & 1 \end{pmatrix}^\top
     &= T_{s,i}\begin{pmatrix} a_1 & a_2 & 1 \end{pmatrix}^\top
     \label{eq:SE2_transform}
     \\
    \text{where, }
    T_{s,i} =
    T(\psi_i,\textbf{p}_i) &:= 
        \left(
        \begin{array}{c|c}
            R(\psi_i) & \textbf{p}_i \\ \hline
            \textbf{0} & 1
        \end{array}
        \right)
        \in \mathrm{SE}(2) 
    \nonumber
\end{align}

\subsection{Specifying an Articulated Vehicle}
A generalized $n$-trailer articulated vehicle system can be described as an ordered sequence $(\mathcal{V}_1, \mathcal{V}_2 , \dots \mathcal{V}_n)$ of several multibody vehicle units $\mathcal{V}_i,\, i \in \{1,\dots,n\}$ as shown in Figure \ref{fig:holonomic}. 
Each vehicle unit $\mathcal{V}_i := (C_i,W_i)$ is composed of a chassis $C_i$ and a set of wheels $W_i:= \{\mathcal{W}_{i,1},\dots,\mathcal{W}_{i,K_i}\}$. 
All vehicle unit chassis $C_i$, and wheels $\mathcal{W}_{i,k}$ are modeled as rigid bodies moving in a plane, and hence, can be endowed with a Special Euclidean $\mathrm{SE}(2)$ matrix Lie group structure in order to describe their configuration. 
A vehicle unit $\mathcal{V}_i$'s chassis $C_i$ is endowed with the matrix Lie group structure $T_{s,i}\in \mathrm{SE}(2)$ as described in equation \eqref{eq:chassis_group}, where $\textbf{p}_i:=(x_i,y_i)\in \mathbb{R}^2$ and $\psi_i \in S^1 := [0,2\pi)$ are respectively, the location and orientation of the chassis $C_i$'s body frame of reference $\{i\}$ with respect to some arbitrary inertial frame of reference $\{s\}$.
\begin{align}
    C_i 
    \leftrightarrow 
    T_{s,i} &= \left(\begin{array}{c|c}
            R(\psi_i) & \textbf{p}_i \\ \hline
            \textbf{0} & 1
          \end{array}\right) \in \mathrm{SE}(2) \label{eq:chassis_group} \\
      \text{where } &
      R(\psi_i)
      \in \mathrm{SO}(2), \, \textbf{p}_i \in \mathbb{R}^2 , \, \psi_i \in S^1
      \nonumber
\end{align}
A wheel $\mathcal{W}_{i,k}$ located on the vehicle unit $\mathcal{V}_i$ is endowed with the Lie group structure $T_{s,w_{i,k}} \in \mathrm{SE}(2)$ described in equation \eqref{eq:wheel_group}:
\begin{equation*}
    W_{i,k} 
    \leftrightarrow 
    T_{s,w_{i,k}} = T_{s,i} T_{i,w_{i,k}} 
                  = T\left(\psi_i,\textbf{p}_i\right) T\left(\theta_{i,k},\textbf{w}_{i,k}\vert_{\{i\}}\right) \in \mathrm{SE}(2) 
\end{equation*}
\begin{equation}
    \implies
    T_{s,w_{i,k}} = \left(\begin{array}{c|c}
        R(\psi_i+\theta_{i,k}) & \textbf{p}_i + R(\psi_i) \textbf{w}_{i,k}\vert_{\{i\}} \\ \hline
        \textbf{0} & 1
      \end{array}\right) 
      \label{eq:wheel_group}
\end{equation}
In equation \eqref{eq:wheel_group}, the wheel's location on the vehicle unit $\mathcal{V}_i$ with respect to the body frame of reference $\{i\}$ is denoted by $\textbf{w}_{i,k}\vert_{\{i\}}:=(a_{i,k},b_{i,k})\in \mathbb{R}^2$, and is constant since the wheel is mounted on the chassis and only rotates about the wheel's vertical $\hat{z}$-axis. In equation \eqref{eq:wheel_group}, $\theta_{i,k} \in S^1 := [0,2\pi)$ is orientation (steering angle) of the wheel with respect to the chassis $C_i$.
As observed in equation \eqref{eq:wheel_group}, the wheel $\mathcal{W}_{i,k}$ is oriented at an angle of $(\psi_i + \theta_{i,k})$ with respect to the inertial frame of reference $\{s\}$.
Let $\textbf{w}_{i,k}\vert_{\{s\}}:=(x_{w_{i,k}},y_{w_{i,k}})\in \mathbb{R}^2$ be the location of the wheel $\mathcal{W}_{i,k}$ with respect to the inertial frame of reference $\{s\}$,  
From equation \eqref{eq:wheel_group}, the inertial location of the wheel $\mathcal{W}_{i,k}$ is determined as follows:
\begin{equation}
    \textbf{w}_{i,k}\vert_{\{s\}}
    =
    \textbf{p}_i + R(\psi_i) \textbf{w}_{i,k}\vert_{\{i\}}
    \label{eq:wheel_inertial_location}
\end{equation}


\begin{figure}
    \centering
    \includegraphics[width=\linewidth]{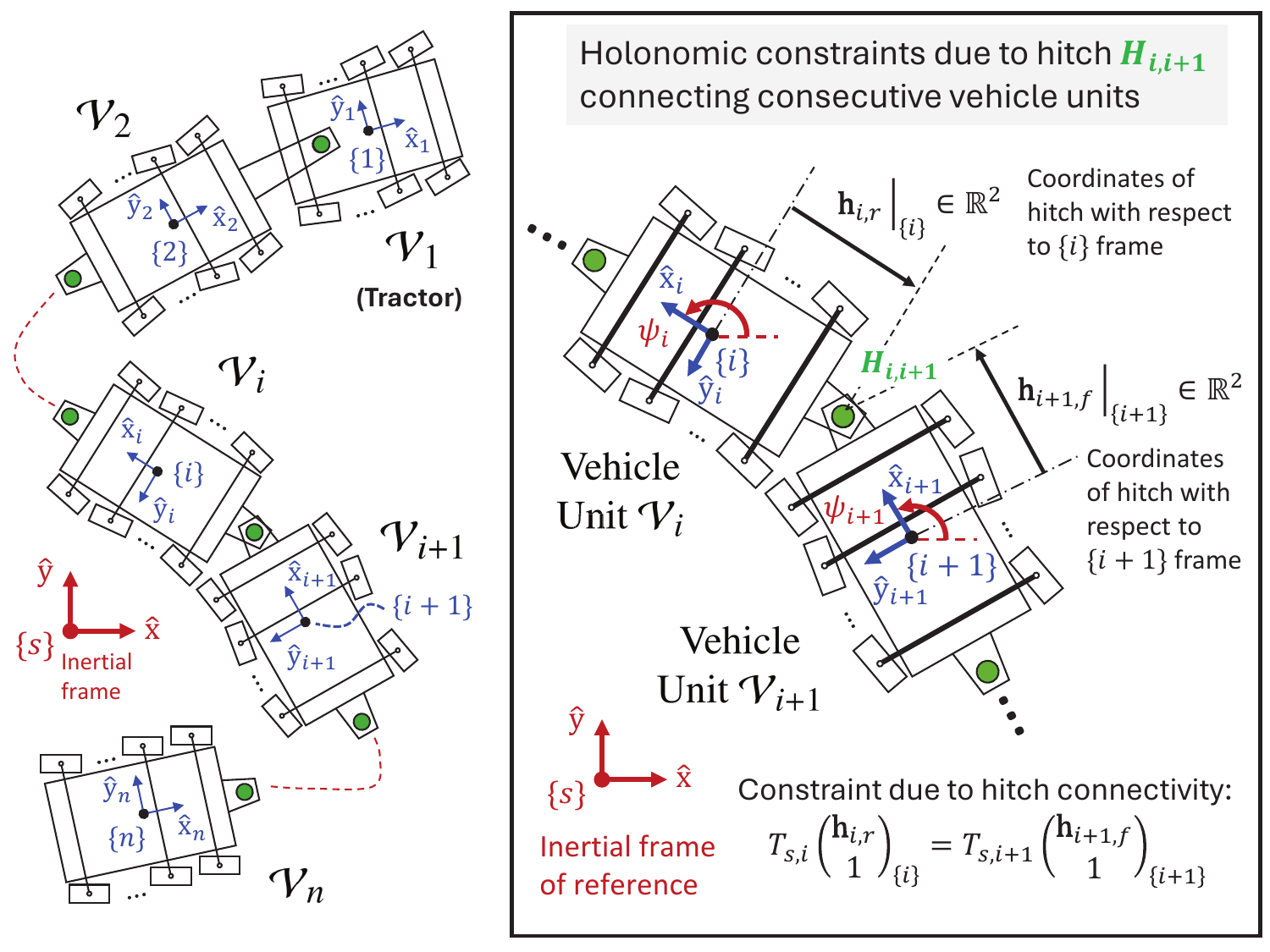}
    \caption{Articulated vehicle system and holonomic constraints: \\ 
    (Left): The articulated vehicle system is schematically shown, and its various frames of reference are defined here. The inertial frame $\{s\}$ is placed arbitrarily, and body frames $\{i\}$ are shown for each vehicle unit. \\
    (Right): The illustration shows the holonomic constraint between two adjacent vehicle units due to hitch connectivity.}
    \label{fig:holonomic}
\end{figure}

\subsection{Holonomic constraints}
The vehicle system's configuration is constrained due to the hitch revolute joints connecting adjacent vehicle units. Consider the hitch joint denoted by $H_{i,i+1}$ connecting vehicle units $\mathcal{V}_i$ and $\mathcal{V}_{i+1}$.
The hitch connectivity constraint asserts that the coordinates of the hitch $H_{i,i+1}$ with respect to the inertial frame $\{s\}$, obtained by transforming hitch coordinates with respect to either of the adjacent vehicle chassis frames $\{i\}$ and $\{i+1\}$ using appropriate homogeneous transformations shall be equal as shown in Figure \ref{fig:holonomic}. 
Mathematically, this is expressed as follows:
\begin{equation*}
    \begin{pmatrix} H_{i,i+1} \\ 1 \end{pmatrix}_{\{s\}}
    =
    T_{s,i}
    \left(\begin{array}{c}
       \textbf{h}_{i,r}  \\  1
    \end{array}\right)_{\{i\}}
    =
    T_{s,i+1}
    \left(\begin{array}{c}
       \textbf{h}_{i+1,f}  \\  1
    \end{array}\right)_{\{i+1\}}
\end{equation*}
where,
$\textbf{h}_{i,r} := H_{i,i+1}\vert_{\{i\}}$ are the coordinates of the rear hitch located on vehicle unit $\mathcal{V}_i$ with respect to chassis frame $\{i\}$. 
Similarly, $\textbf{h}_{i+1,f} := H_{i,i+1}\vert_{\{i+1\}}$ are the coordinates of the front hitch located on vehicle unit $\mathcal{V}_{i+1}$ with respect to chassis frame $\{i+1\}$. 
Hence, the above constraint equation leads to the following holonomic constraint $\forall i\in\{1,\dots,n-1\}$:
\begin{equation}
    \textbf{p}_{i+1} = \textbf{p}_i + R(\psi_i) \textbf{h}_{i,r} - R(\psi_{i+1}) \textbf{h}_{i+1,f} 
    \label{eq:holonomic_constraint_iterative}
\end{equation}
Hence, it is observed in equation \eqref{eq:holonomic_constraint_iterative} that, given the inertial coordinates and orientation of unit $\mathcal{V}_i$, the inertial coordinates of its trailing unit $\mathcal{V}_{i+1}$ are dependent only on its orientation (yaw) angle $\psi_{i+1}$.
Performing the iterative computation described in equation \eqref{eq:holonomic_constraint_iterative} for $i = 1$ to $n-1$, the expression shown in equation \eqref{eq:holonomic_constraint} is obtained, which describes the holonomic constraint for each trailer unit in terms of the tractor location and orientation angle of all vehicle units leading the trailer.
Hence, for all trailer units, that is, $\forall i \in \{2,\dots,n\}$:
\begin{equation}
    \textbf{p}_i = \textbf{p}_1 
                    + \left( \sum_{m=1}^{i-1} R(\psi_m) \left( \textbf{h}_{m,r} - \textbf{h}_{m,f} \right) \right)
                    - R(\psi_{i}) \textbf{h}_{i,f}
    \label{eq:holonomic_constraint}
\end{equation}
Since, the notion of a front hitch on a tractor unit is trivial, in equation \eqref{eq:holonomic_constraint}, $h_{1,f} := 0$.

Hence, the Lie group structure of the chassis units $C_i$ can be updated by substituting equation \eqref{eq:holonomic_constraint} into equation \eqref{eq:chassis_group}, thus obtaining equation \eqref{eq:trailer_group}:
\begin{align}
    \text{Tractor: }
    C_1 
    &\leftrightarrow 
    T_1 = \left(\begin{array}{c|c}
            R(\psi_1) & \textbf{p}_1 \\ \hline
            \textbf{0} & 1
          \end{array}\right)
      \hspace{0.05in} , \,
      \textbf{p}_1 := \begin{pmatrix}
                          x_1 \\ y_1
                      \end{pmatrix}
     \label{eq:tractor_group} 
    \\
    \text{Trailers: }
    C_i 
    &\leftrightarrow 
    T_{s,i}= \left(\begin{array}{c|c}
            R(\psi_i) & \textbf{p}_i \\ \hline
            \textbf{0} & 1
          \end{array}\right) 
    \hspace{0.05in} , \ \forall i\geq2
     \label{eq:trailer_group} 
    \\
    \text{where, } & \textbf{p}_i\left(\textbf{p}_1,\psi_1, \dots, \psi_i \right) \text{: refer equation \eqref{eq:holonomic_constraint}}
    \nonumber
\end{align}

\subsection{Configuration space}
It is observed in equation \eqref{eq:trailer_group} that due to the hitch joint (holonomic constraint: equation \eqref{eq:holonomic_constraint}) between two adjacent vehicles, specifying the location $\textbf{p}_i = (x_i,y_i), \, \forall i>1$ of trailer units ($\mathcal{V}_i , \ i>1$) is not necessary to define the minimal coordinates needed to  define the configuration of the entire vehicle system. This is because, $\textbf{p}_i$ is determined from tractor position $\textbf{p}_1 \in \mathbb{R}^2$ and yaw angles $\{\psi_1,\dots,\psi_i\} , \, \forall i \leq n$,  where $\psi_j \in S^1 := [0,2\pi) , \, \forall j$ using equation\eqref{eq:holonomic_constraint}. 
%
Similarly, from equation \eqref{eq:wheel_group}, it can be concluded that the configuration of a wheel $\mathcal{W}_{i,k}$ can be uniquely defined by using $\textbf{p}_1 \in \mathbb{R}^2$, $\{\psi_1,\dots,\psi_i\} , \, \forall i \leq n$, and the steering angle $\theta_{i,k} \in S^1$. 

Hence, the generalized coordinates $\textbf{q}$ required to specify the configuration of the vehicle system (chassis + wheels) are defined in equation \eqref{eq:genCoordinates} as follows:

\begin{align}
    \textbf{q} = [ &\underbrace{x_1,y_1}_{\text{location of } \mathcal{V}_1}, \ \  
            \underbrace{\psi_1,\psi_2,\dots,\psi_N}_{\text{yaw angles of } \mathcal{V}_i},  
            \underbrace{\theta_{1,1},\dots,\theta_{1,K_1}}_{\text{steering angles of } \mathcal{V}_1} \ \ , \nonumber \\ 
           &\underbrace{\theta_{2,1},\dots,\theta_{2,K_2}}_{\text{steering angles of } \mathcal{V}_2}, \
           \dots \ , \ \underbrace{\theta_{n,1},\dots,\theta_{n,K_n}}_{\text{steering angles of } \mathcal{V}_n} ]^\top \in \mathcal{C} \label{eq:genCoordinates} \\
           & \text{where, } \textbf{p}_1 = (x_1,y_1) \in \mathbb{R}^2, \ \ \ \psi_i, \theta_{i,k} \in S^1 := [0,2\pi) \nonumber
\end{align}
The configuration space $\mathcal{C}$ is determined in equation \eqref{eq:Cspace} from the generalized coordinates $\textbf{q}$, thus showing that a generalized articulated vehicle is a $\left( 2 + n + \sum_{i=1}^n K_i \right)$-dimensional system.
\begin{equation*}
    \mathcal{C} = \mathbb{R}^2 \times \underbrace{S^1 \times \dots \times S^1}_{n \text{ times}} \times
                   \underbrace{S^1 \times \dots \times S^1}_{K_1 \text{ times}} \dots \times \underbrace{S^1 \times \dots \times S^1}_{K_n \text{ times}}
\end{equation*}
\begin{equation}
   \implies \mathcal{C} = \mathbb{R}^2 \times \mathbb{T}^{ \left( n+\sum_{i=1}^n K_i \right) }  
   \implies \mathrm{dim} \, \mathcal{C} = 2 + n+\sum_{i=1}^n K_i 
   \label{eq:Cspace}
\end{equation}

\subsection{Nonholonomic Pfaffian constraints}
\begin{figure}
    \centering
    \includegraphics[width=0.7\linewidth]{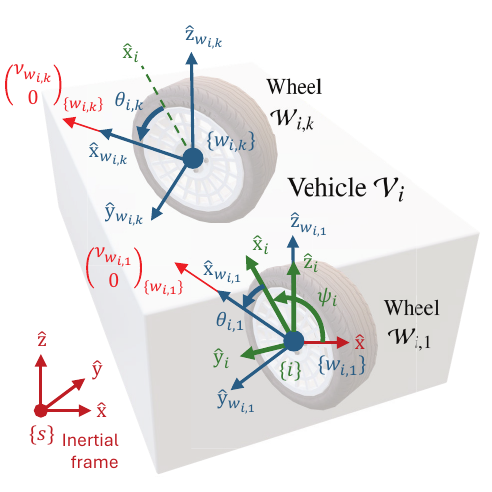}
    \caption{Illustration: Nonholonomic constraint. Notice that due to the assumption of rolling without slipping, the velocity vector for each wheel must be oriented along the longitudinal axis of the wheel frame of reference.}
    \label{fig:nonholonomic}
\end{figure}
Nonholonomic constraints 
do not restrict the configuration space of the system, but lead to inadmissibility of certain directions of movement (velocities).
An arbitrary wheel $\mathcal{W}_{i,k}$ rolling without slipping can only move in the $\hat{\mathrm{x}}$-axis of the wheel frame of reference $\{w_{i,k}\}$ (refer Figure \ref{fig:nonholonomic}). Hence, the velocity of the wheel with respect to $\{w_{i,k}\}$ is $(v_{w_{i,k}},0)\vert_{\{w_{i,k}\}}$.  
Since the frame $\{w_{i,k}\}$ is oriented at an angle of $(\psi_i + \theta_{i,k})$, the velocity vector $(\dot{x}_{w_{i,k}},\dot{y}_{w_{i,k}})_{\{s\}}$ of the wheel's road contact point with respect to inertial frame $\{s\}$ is obtained by transforming using the rotation matrix $R(\psi_i + \theta_{i,k}) \in \mathrm{SO}(2)$ as shown in equation \eqref{eq:wheel_velocity_rotation}.
\begin{equation}
    \begin{pmatrix}
        \dot{x}_{w_{i,k}} \\ \dot{y}_{w_{i,k}} 
    \end{pmatrix}
    = R(\psi_i +\theta_{i,k} ) \begin{pmatrix} v_{w_{i,k}}  \\ 0 \end{pmatrix}
    \label{eq:wheel_velocity_rotation}
\end{equation}
Using equation \eqref{eq:SO2_inverse}:
\begin{equation}
    \implies
    \begin{pmatrix} v_{w_{i,k}}  \\ 0 \end{pmatrix}
    =  R(\psi_i +\theta_{i,k} )^\top \begin{pmatrix} \dot{x}_{w_{i,k}} \\ \dot{y}_{w_{i,k}} \end{pmatrix} 
    =  R(-\psi_i -\theta_{i,k} ) \begin{pmatrix} \dot{x}_{w_{i,k}} \\ \dot{y}_{w_{i,k}} \end{pmatrix} \label{nonholonomic_wheel}
\end{equation}
Using equations \eqref{eq:wheel_inertial_location} and \eqref{eq:holonomic_constraint}
\begin{align*}
    \begin{pmatrix}
        \dot{x}_{w_{i,k}} \\ \dot{y}_{w_{i,k}} 
    \end{pmatrix}
    &= \dv{t} \textbf{w}_{i,k} \vert_{\{s\}} 
        = \dv{t}\textbf{p}_1 
        + \left( \sum_{m=1}^{i-1}  \dv{t}(R(\psi_m ))  \left( \textbf{h}_{m,r} - \textbf{h}_{m,f} \right) \right) 
        \\
        & \ \ \ \ \ \ + \dv{t}(R(\psi_{i})) \left(  \textbf{w}_{i,k}\vert_{\{i\}} - \textbf{h}_{i,f}  \right) 
\end{align*}
Using equation \eqref{eq:SO2_timederivative}
\begin{align*}
    \begin{pmatrix}
        \dot{x}_{w_{i,k}} \\ \dot{y}_{w_{i,k}} 
    \end{pmatrix}
    &= \dot{\textbf{p}}_1 
        + \left( \sum_{m=1}^{i-1} R\left(\psi_m  + \frac{\pi}{2}\right) \left( \textbf{h}_{m,r} - \textbf{h}_{m,f} \right) \dot{\psi}_m   \right) \\
        & \ \ \ \ +  R\left( \psi_{i}  + \frac{\pi}{2} \right) \left(  \textbf{w}_{i,k}\vert_{\{i\}} - \textbf{h}_{i,f} \right) \dot{\psi}_{i} 
\end{align*}
Substituting the above result in equation \eqref{nonholonomic_wheel} and by using equation \eqref{eq:SO2_sequence}, the following result is obtained:
\begin{align*}
    &\begin{pmatrix} v_{w_{i,k}}  \\ 0 \end{pmatrix} 
    = R(\psi_i +\theta_{i,k} )^\top \dot{\textbf{p}}_1 \\
        & \ \ \ \ + \left( \sum_{m=1}^{i-1} R\left(\psi_m  -\psi_i -\theta_{i,k}  + \frac{\pi}{2}\right) \left( \textbf{h}_{m,r} - \textbf{h}_{m,f} \right) \dot{\psi}_m   \right) \\
        & \ \ \ \ \ \ \ \ +  R\left( \frac{\pi}{2} -\theta_{i,k}  \right) \left(\textbf{w}_{i,k}\vert_{\{i\}} - \textbf{h}_{i,f} \right) \dot{\psi}_{i}  
\end{align*}
Hence, in matrix form, the wheel velocity and the nonholonomic constraint on wheel $\mathcal{W}_{i,k}$ is given as follows:
\begin{align}
    \begin{pmatrix} v_{w_{i,k}}  \\ 0 \end{pmatrix} 
    &= \begin{bmatrix}
         V_{i,k}\left( \textbf{q} \right) \\  
         A_{i,k}\left( \textbf{q} \right)
    \end{bmatrix}
    \dot{\textbf{q}} \nonumber \\ 
    &=
    \begin{bmatrix}
        R(\psi_i + \theta_{i,k})^\top &
        \mathcal{I}_{i,k} &
        \mathcal{S}_{i,k} &
        \textbf{0}_{2\times n-i} & 
        \textbf{0}_{2 \times \sum{K_i}}
    \end{bmatrix}
    \dot{\textbf{q}} 
     \label{eq:wheel_velocity_constraint}
\end{align}

where, $\forall 1 \leq i\leq n$ and $\forall 1<m<i$
\begin{align*}
    \mathcal{I}_{i,k} &=  \begin{bmatrix} 
                        \mathcal{I}_{i,2,k}   & 
                        \dots               &
                        \mathcal{I}_{i,i-1,k}
                    \end{bmatrix} \\
    \mathcal{I}_{i,m,k} &:= R\left(\psi_m  -\psi_i -\theta_{i,k}  + \frac{\pi}{2}\right) \left( \textbf{h}_{m,r} - \textbf{h}_{m,f} \right)  \\ 
    \mathcal{S}_{i,k} &:= R\left( \frac{\pi}{2} -\theta_{i,k}  \right) \left(\textbf{w}_{i,k}\vert_{\{i\}} - \textbf{h}_{i,f} \right) \\
    \dot{\textbf{q}} &= 
            \begin{bmatrix}
                \dot{\textbf{p}}_1 &
                \dot{\Psi}_{i-}^\top &
                \dot{\psi}_i &
                \dot{\Psi}_{i+}^\top & 
                \dot{\Theta}^\top
            \end{bmatrix}^\top \\
    \dot{\Psi}_{i-} &=
            \begin{bmatrix}
                \dot{\psi}_{1} & 
                \dot{\psi}_{2} & 
                \dots &
                \dot{\psi}_{i-1}
            \end{bmatrix}^\top \\
    \dot{\Psi}_{i+} &=
            \begin{bmatrix}
                \dot{\psi}_{i+1} & 
                \dot{\psi}_{i+2} & 
                \dots &
                \dot{\psi}_{n}
            \end{bmatrix}^\top \\
    \dot{\Theta} &=
            \left[
            \begin{array}{ccc|c|ccc}
                \dot{\theta}_{1,1} &
                \dots &
                \dot{\theta}_{1,K_1} & 
                \dots & 
                \dot{\theta}_{n,1} & 
                \dots &
                \dot{\theta}_{n,K_n}
            \end{array}
            \right]
\end{align*}


In matrix equation \eqref{eq:wheel_velocity_constraint}, the top row 
describes the transformation from the system's generalized velocity to the wheel velocity, while the bottom row $A_{i,k}(\textbf{q})\dot{\textbf{q}}=0$ describes a nonholonomic constraint on wheel $\mathcal{W}_{i,k}$ in Pfaffian form \cite{lynch_park_robotics}.
The component $\mathcal{I}_{i,m,k}$ accounts for the impact of yaw rates of all intermediate vehicle units (that is, $\mathcal{V}_1,\dots, \mathcal{V}_{i-1}$) on the nonholonomic constraint on a wheel $\mathcal{W}_{i,k}$ located on vehicle unit $\mathcal{V}_i$. The component $\mathcal{S}_{i,k}$ accounts for the impact of yaw rate of the vehicle unit $\mathcal{V}_i$ on the nonholonomic constraint on wheel $\mathcal{W}_{i,k}$ which is located on $\mathcal{V}_i$.

\section{Model Construction: Kernel Computation}
\label{sec:kernel}
The nonholonomic constraints $A_{i,k}(\textbf{q})\dot{\textbf{q}}=0$ (equation \eqref{eq:wheel_velocity_constraint}) on all wheels can be collected in the form of a matrix equation  $A(\textbf{q})\dot{\textbf{q}} = \mathbf{0}$, where $A(\textbf{q})$ is defined as follows:
\begin{equation}
    A(\textbf{q}) = \begin{pmatrix}
                        A_1(\textbf{q}) \\ A_2(\textbf{q}) \\ \vdots \\ A_n(\textbf{q})
                    \end{pmatrix}  \ , \  
    \text{where } A_{i}(\textbf{q}) = \begin{pmatrix}
                        A_{i,1}(\textbf{q}) \\ A_{i,2}(\textbf{q}) \\ \cdots \\ A_{i,K_i}(\textbf{q})
                    \end{pmatrix} \label{eq:pfaffian}
\end{equation}

The kinematic model of the vehicle system can be constructed by computing the kernel or null space of the Pfaffian constraint matrix $A(q)$ as shown in equation \eqref{eq:kernel_kinematics}.
\begin{align}
    A(\textbf{q})\dot{\textbf{q}} &= \mathbf{0} 
    \iff 
    \dot{\textbf{q}} \in \mathrm{Ker}\,A(\textbf{q}) 
                := \left\{ \dot{\textbf{q}} \ \vert \ A(\textbf{q}) \dot{\textbf{q}} = \mathbf{0} \right\} \label{eq:kernel_kinematics} \\
    &\implies \dot{\textbf{q}} = \sum_{m=1}^{M} J_m(\textbf{q})u_{\textbf{q}_m} = J(\textbf{q}) u \label{eq:kinematic_model_jacobian} \\
    &\text{where, } J(\textbf{q}) = \begin{bmatrix}
                                     J_1(\textbf{q}) & J_2(\textbf{q}) & \cdots & J_M(\textbf{q})
                                   \end{bmatrix}
                    , \ u \in \mathbb{R}^M
    \nonumber
\end{align}
 
$J(\textbf{q})$ is a matrix whose columns $J_m(\textbf{q}), \ m \in \{1,\cdots,M\}$ are the basis vectors of $\text{Ker}\,A(\textbf{q})$, and $u$ are the kinematic controls that can be used to move the system in the directions allowed by the natural nonholonomic constraints of the system.

From the construction of the constraint equations \eqref{eq:wheel_velocity_constraint} and \eqref{eq:pfaffian}, it is evident that the constituent blocks of the matrix $A(\textbf{q})$ constructed by considering the bottom row from equation \eqref{eq:wheel_velocity_constraint} have the following structure as shown below:
\small

\begin{equation}
    \renewcommand{\arraystretch}{1.2}
    A_1(\textbf{q}) =
    \begin{bmatrix}
        \begin{pmatrix}0 & 1\end{pmatrix} R(\psi_1+\theta_{1,1})^\top    & \begin{pmatrix}0 & 1\end{pmatrix} \mathcal{S}_{1,1}   & \textbf{0} \\
        \begin{pmatrix}0 & 1\end{pmatrix} R(\psi_1+\theta_{1,2})^\top    & \begin{pmatrix}0 & 1\end{pmatrix} \mathcal{S}_{1,2}   & \textbf{0} \\
        \vdots                                                           & \vdots              & \vdots      \\
        \begin{pmatrix}0 & 1\end{pmatrix} R(\psi_1+\theta_{1,K_1})^\top  & \begin{pmatrix}0 & 1\end{pmatrix} \mathcal{S}_{1,K_1} & \textbf{0} \\
    \end{bmatrix}
    \label{eq:A1_block}
\end{equation}
\begin{equation}
    \renewcommand{\arraystretch}{1.2}
    \underset{\text{for } i>1}{A_i(\textbf{q})} = 
    \begin{bmatrix}
        \begin{pmatrix}0 & 1\end{pmatrix}R(\psi_i+\theta_{i,1})^\top   & \begin{pmatrix}0 & 1\end{pmatrix}\mathcal{I}_{i,1}   & \begin{pmatrix}0 & 1\end{pmatrix}\mathcal{S}_{1,1}   & \textbf{0} \\
        \begin{pmatrix}0 & 1\end{pmatrix}R(\psi_i+\theta_{i,2})^\top   & \begin{pmatrix}0 & 1\end{pmatrix}\mathcal{I}_{i,2}   & \begin{pmatrix}0 & 1\end{pmatrix}\mathcal{S}_{1,2}   & \textbf{0} \\
        \vdots                        & \vdots              & \vdots    & \vdots      \\
        \begin{pmatrix}0 & 1\end{pmatrix}R(\psi_i+\theta_{i,K_1})^\top & \begin{pmatrix}0 & 1\end{pmatrix}\mathcal{I}_{i,K_1} & \begin{pmatrix}0 & 1\end{pmatrix}\mathcal{S}_{1,K_1} & \textbf{0} \\
    \end{bmatrix}
    \label{eq:Ai_block}
\end{equation}

\normalsize
Note that, in equations \eqref{eq:A1_block} and \eqref{eq:Ai_block}, the matrix $\begin{pmatrix} 0 & 1 \end{pmatrix}$ is used as a selection operator that selects the bottom row from equation \eqref{eq:wheel_velocity_constraint} to achieve constraint construction for the entire rigid body and the vehicle system in accordance with equation \eqref{eq:pfaffian}. 

The kernel of the matrix $A(\textbf{q})$ can be computed by transforming the equation $A(\textbf{q})\dot{\textbf{q}}$ such that an upper triangular pattern of zeros emerges, thus making an iterative solution by back substitution feasible. This is achieved by first placing the coordinate frames ${i}$ on the first wheel (arbitrarily indexed at $k=1$) mounted on vehicle unit $\mathcal{V}_i$, thus making $\mathcal{S}_{i,1}=0, \, \forall i$. Then, in order to create a upper triangular pattern of zeros, all rows except the first row and another arbitrarily selected row in matrix $A_1(\textbf{q})$ (equation \eqref{eq:A1_block}) are eliminated. Similarly, all rows except the first row in matrix $A_i(\textbf{q}), \, \forall i>1$ (equation \eqref{eq:Ai_block}) are eliminated. These row eliminations have a physical implication that imposes the existence of an instantaneous center of rotation for each moving rigid chassis. As a result, only two wheels on the tractor and one wheel on the trailer can be independently commanded steering inputs.
Hence, after eliminating rows, $\textbf{x} \subseteq \textbf{q}, \, x\in \mathcal{C}_r$ are coordinates in the reduced configuration manifold $\mathcal{C}_r$ for which kernel computation is feasible. 
\begin{equation*}
 \textbf{x} = [
    \begin{underbrace}{x_1 , y_1}_{\mathcal{V}_1\text{ location}} \end{underbrace} ,
    \begin{underbrace}{\psi_1 , \psi_2 , \cdots , \psi_n}_{\text{chassis orientation}}\end{underbrace} , 
    \begin{underbrace}{\theta_{1,1} , \theta_{1,K_1} , \theta_{2,K_2} \cdots , \theta_{n,K_n}}_{\text{orientation of independent wheels}}\end{underbrace}] \in \mathcal{C}_r
\end{equation*}
Hence, the iterative kernel computation (from first to last row) of the system of equations $A(\textbf{x})\dot{\textbf{x}}=0$ gives the kinematic model $\dot{\textbf{x}} = J(\textbf{x}) u$, where $u = \begin{bmatrix} v_{w_{1,1}} & \omega_{1,1} & \omega_{1,k} & \omega_{2,1} & \cdots & \omega_{n,1} \end{bmatrix}^\top$.
%
%
The kinematic model obtained from the iterative method is of the form shown in equation \eqref{eq:kinematic_model}
\begin{equation}
    \dot{\textbf{x}} = \left(\begin{array}{c|c}
                                \mathcal{F}(\textbf{x})  & \textbf{0}_{(n+2) \times \sum_i K_i} \\ \hline
                                \textbf{0}_{\sum_i K_i \times 1}  & I_{\sum_i K_i}
                             \end{array}\right) u
    \label{eq:kinematic_model}
\end{equation}
where, $\mathcal{F}(\textbf{x}) := \begin{bmatrix} f_{x_1}(\textbf{x}) & f_{y_1}(\textbf{x}) & f_{\psi_1}(\textbf{x}) & \cdots & f_{\psi_n}(\textbf{x}) \end{bmatrix}^\top$, such that $f_{x_1}(\cdot): \mathcal{C}_r \rightarrow \mathbb{R}$, $f_{y_1}(\cdot): \mathcal{C}_r \rightarrow \mathbb{R}$, and $f_{\psi_i}(\cdot): \mathcal{C}_r \rightarrow \mathbb{R}$, for $i \in {1,2,\cdots,n}$ are scalar-valued functions.

This iterative kernel computation will be explored in greater detail in Section \ref{sec:genAckermann}.

\subsection{Rearward Yaw Rate Amplification: }
The rearward yaw rate amplification of $\mathcal{V}_j$ with respect to $\mathcal{V}_i$, denoted by $\mathrm{RWA}_{i,j}$ is calculated as shown in equation \eqref{eq:rwa}.
\begin{equation}
    \mathrm{RWA}_{i,j} = \frac{\dot{\psi}_j}{\dot{\psi}_i} 
                       = \frac{f_{\psi_i}(\textbf{x}) v_{w_{1,1}}}{f_{\psi_j}(\textbf{x}) v_{w_{1,1}}}
                       = \frac{f_{\psi_i}(\textbf{x})}{f_{\psi_j}(\textbf{x})}
    \label{eq:rwa}
\end{equation}
In the literature, rearward amplification is defined as the ratio between the peak values of lateral acceleration and/or yaw rates of a trailer unit with respect to the tractor unit \cite{trigell_truck_trailer_dynamics,winkler1999rollover,luijten_rwa}. Rearward amplification is a good metric to analyze directional stability since it directly correlates with the rollover propensity of an articulated vehicle \cite{winkler1999rollover,ahmadian_rwa,rwa_lqr}. In this paper, the rearward yaw rate amplification is of interest from an analysis point of view since yaw rates being generalized velocities of the system, can be directly obtained from these first-order kinematic models. Thus, understanding the predictive capability of these first order models for rearward yaw rate amplification response metrics is of profound importance for motion planning and control design, especially for safety assurance at the trajectory and motion planning level in a directional and rollover safety sense.

\section{Symbolic Algorithm: Summary and Computational Complexity Analysis}
\label{app:code}
\RestyleAlgo{ruled}
The symbolic algorithm for deriving kinematic models for generalized $n$-trailer multi-axle articulated vehicles can be broken down into three broad steps:
\begin{enumerate}
    \item Define symbolic variables for generalized coordinates and parameters: Refer Algorithm \ref{alg:sub1_declareGenCoords}.
    \item Construction of nonholonomic constraint matrix $A(\textbf{x})$ for the system: Refer Algorithm \ref{alg:constraintComputation}.
    \item Constructing the forward kinematics model by evaluating $\mathrm{Ker}A(\textbf{x})$: Refer Section \ref{sec:kernel}.
\end{enumerate}

\begin{algorithm}
    \caption{Subroutine: Define symbolic variables for generalized coordinates and vehicle unit geometry}
    \label{alg:sub1_declareGenCoords}
    \DontPrintSemicolon
    \KwIn{Number of vehicle units $n$ with number of wheels $[K_1, K_2, \cdots , K_n]$, where vehicle unit $\mathcal{V}_i$ has $K_i$ wheels, $i \in \{1,2,\cdots,n\}$.}

    \SetKwFunction{algo}{algo}\SetKwFunction{proc}{defineSymbolicVariables}
    
    \SetKwProg{myproc}{Procedure}{}{}
        \myproc{\proc{$n,\mathcal{K}$}}{
        \nl Define tractor location $x_1, y_1$\; 
        \nl $\textbf{q} \gets [x_1, y_1]$ \tcp*{Generalized coordinates}
    
        \nl \For{$i \gets 1$ \KwTo $n$} {
            \nl Define symbolic variable $\psi_i$ \; 
            \nl Append $\psi_i$ to $\textbf{q}$
        }
        
        \nl $\textbf{x} \gets \textbf{q}$ \tcp*{Reduced configuration for feasible kernel computation}
        
        \nl \For{$i \gets 1$ \KwTo $n$} {
            \nl \For{$k \gets 1$ \KwTo $K_i$} {
                \nl Define steering angle $\theta_{i,k}$\;
                \nl Append $\theta_{ik}$ to $\textbf{q}$\;
                \nl \If{($i == 1 \land k \leq 2$) \textbf{or} ($i > 1 \land k < 2$)} {
                    \nl Append $\theta_{i,k}$ to $\textbf{x}$ 
                }
            }
        }
        \nl \KwRet $\textbf{x}$ \;
        }
\end{algorithm}

\begin{algorithm}
    \caption{Construction of nonholonomic constraint matrix}
    \label{alg:constraintComputation}
    \DontPrintSemicolon
    \KwIn{Number of vehicle units $n$ with number of wheels $\mathcal{K} = [K_1, K_2, \cdots , K_n]$.}

    \SetKwFunction{algo}{algo}\SetKwFunction{proc}{defineSymbolicVariables}
    
    \SetKwProg{myalg}{Algorithm}{}{}
    \myalg{\algo{$n,\mathcal{K}$}}{
    \nl $\textbf{x} \gets$ \proc($n,\mathcal{K}$) \;
    \nl \For{$i \gets 1$ \KwTo $n$} {
    \nl       Define symbolic hitch positions $h_{i,r}, h_{i,f}$\;
    \nl       \For{$k \gets 1$ \KwTo $K_i$} {
    \nl       Define symbolic wheel location $w_{i,k}\vert_{\{i\}} $\;
            }
        }
    }
    
    \nl $initCounter \gets 1$\;

    \nl \For{$i \gets 1$ \textbf{to} $n$}{
    \nl    \For{$k \gets 1$ \textbf{to} $K_i$}{
    \nl        \eIf{$(i == 1 \land k \leq 2) \lor (i>1 \land k<2)$}{
                     \tcp{Construct constraint matrix $C_{i,k}$ for independently controllable wheel $\mathcal{W}_{i,k}$}
                        \nl    $C_{i,k} \gets R(\psi_i + \theta_{i,k})^\top$ \; 
                        \tcp{where $R(\cdot):[0,2\pi)\rightarrow \mathrm{SO}(2)$}

                        \nl \If{$i > 1$}{
                        \nl    \For{$m \gets 1$ \textbf{to} $i-1$}{
                        \nl        $I_{i,m,k} \gets R\left(\psi_m - \psi_i - \theta_{i,k} + \frac{\pi}{2}\right)(h_{m,r} - h_{m,f})$ \tcp*{Constraint component due to imtermediate trailers}
                        \nl        $C_{i,k} \gets \begin{bmatrix}C_{i,k} & I_{i,m,k}\end{bmatrix}$\;
                            }
                        }

                        \tcp{Constraint Component due to vehicle unit on which wheel is located}
                        \nl $S_i \gets R\left(\frac{\pi}{2} - \theta_{ik}\right)(w_{i,k}\vert_{\{i\}} - h_{i,f})$\;
                        \nl $C_{i,k} \gets \begin{bmatrix}C_{i,k} & S_i \end{bmatrix}$\;

                        \tcp{zero padding}
                        \nl $C_{i,k} \gets \begin{bmatrix}C_{i,k} & \textbf{0}_{2,\lvert\textbf{x}\rvert - \texttt{size}(C_{i,k},2)} \end{bmatrix}$\;

                        \tcp{select bottom row: nonholonomic constraint $A_{i,k}$}
                        \nl $A_{i,k} \gets \begin{bmatrix}0 & 1\end{bmatrix}C_{i,k}$

                        \eIf{$initCounter == 1$}{
                        \nl $A \gets A_{i,k} $ \;
                        \nl $initCounter \gets initCounter + 1$
                        }
                        {
                        \nl $A \gets \begin{pmatrix}A \\ A_{i,k} \end{pmatrix} $
                        }  
                }{}
                }
            }
\end{algorithm}

Out of the three steps described above, Step 2 and 3 have the highest computational overhead due to operations within several nested loops. Hence, the computational complexity for steps 2 and 3 is analyzed below using the Big-O notation.

\begin{itemize}
    \item \textbf{Complexity of Nonholonomic constraint matrix construction} (Algorithm \ref{alg:constraintComputation}): The algorithm has the following structure:
    \begin{enumerate}
        \item The nested loop from lines 2 to 5: Considering the maximum number of wheels on every vehicle to be $K$ in the worst case, this nested loop results in $\mathcal{O}\left( \sum_{i=1}^{n} K_i \right)$ $= \mathcal{O}\left( n K \right)$ complexity.
        \item Nested loop from lines 7 to 21 also contributes $\mathcal{O}\left( n K \right)$ towards the overall complexity.
        \item The inner loop from lines 12 to 14 can run $i$ times for some wheel, thus contributing $\mathcal{O}\left( \sum_{i=1}^{n} K_i i \right)$, which contributes $\mathcal{O}\left( n^2 K \right)$ in the worst case.
    \end{enumerate}
    Hence, considering the most dominant term from the three loops discussed above, the computational complexity of the constraint matrix construction step is $\mathcal{O}\left( n^2 K \right)$.

    \item \textbf{Complexity of Kernel computation}: After Step 2, the size of the matrix $A(\textbf{x})$ scales linearly with the number of vehicle units $n$ in the articulated vehicle system, and has a lower triangular form. Hence, this step effectively involves solving a linear system of equations $A(\textbf{x})\dot{\textbf{x}}=\textbf{0}$ using back substitution. Hence, it takes one operation to solve first row, two operations to solve second row, three operations to solve third row, and so on. This results in the kernel computation step having computational complexity of $\mathcal{O}\left( \frac{n(n+1)}{2} \right) = \mathcal{O}(n^2)$.
\end{itemize}

\section{Generalized Ackermann Steering Law}
\label{sec:genAckermann}
In order to compute the configuration $\textbf{q}$, the rows of $A(\textbf{q})$ eliminated from the matrices given in equations \eqref{eq:A1_block} and \eqref{eq:Ai_block} to achieve a feasible $\text{Ker}\,A(\textbf{x})$ 
must be accounted for as algebraic constraints. 

To illustrate the above claim, iterative kernel computation for a multi-axle vehicle unit is considered in a step-by-step manner:
Consider a multi-axle tractor unit with $K_1>2$ wheels. Hence, the generalized velocities are:
\begin{equation*}
    \dot{\textbf{q}} = \begin{bmatrix}
        \dot{x}_1 & \dot{y}_1 & \dot{\psi}_1 & \dot{\theta}_{1,1} & \dot{\theta}_{1,2} & \cdots & \dot{\theta}_{1,K_1}
    \end{bmatrix}^\top
\end{equation*}
The system follows nonholonomic constraints $A(\textbf{q})\dot{\textbf{q}}=0$. Hence, using equation \eqref{eq:kernel_kinematics}, $\dot{\textbf{q}} = J(\textbf{q})u$.

\begin{itemize}
    \item \noindent\textbf{Initialization:} Let $J(\textbf{q})$ be initialized as an identity matrix, thus implying independent control of all generalized velocities as shown below:
    \begin{align*}
        J(\textbf{q}) :&= \begin{bmatrix}
                    J_{x_1} & J_{y_1} & J_{\psi_1} & J_{\theta_{1,1}}  & \cdots & J_{\theta_{1,K_1}}
                \end{bmatrix} = I_{3+K_1}
                \\ 
        u :&= \begin{bmatrix}
             u_{x_1} & u_{y_1} & u_{\psi_1} & u_{\theta_{1,1}} & \cdots & u_{\theta_{1,K_1}}
        \end{bmatrix}^\top 
    \end{align*}
    \begin{equation*}
        \implies \dot{\textbf{q}} = J(\textbf{q}) u = u
    \end{equation*}
    
    \item \textbf{Step 1:} Pick wheel $\mathcal{W}_{1,1}$. 
    Since $A(\textbf{q})\dot{\textbf{q}} = \textbf{0} \implies A_1(\textbf{q})\dot{\textbf{q}} = \textbf{0} \implies A_{1,1}(\textbf{q})\dot{\textbf{q}} = 0$, using equation \eqref{eq:A1_block}, the nonholonomic constraint on wheel $\mathcal{W}_{1,1}$ is:
    \begin{align*}
        & A_{1,1}(\textbf{q})\dot{\textbf{q}} = A_{1,1}(\textbf{q})J(\textbf{q}) u = 0 \\
        \implies & \begin{pmatrix}0 & 1\end{pmatrix} R(\psi_1+\theta_{1,1})^\top \begin{pmatrix}
            u_{x_1} \\ u_{y_1}
        \end{pmatrix}
        + \begin{pmatrix}0 & 1\end{pmatrix} \mathcal{S}_{1,1} u_{\psi_1}
        = 0
    \end{align*}
    Here, the choice of locating the origin of vehicle body frames of reference $\{i\}$ on an arbitrary wheel becomes useful since it provides a starting point for the kernel computation. Hence, without loss of generality, it can be assumed that the origin of vehicle unit's frame of reference is placed on wheel $\mathcal{W}_{1.1}$, that is, $\textbf{w}_{1,1}\vert_{\{1\}} = \textbf{0}$. Moreover, front hitch for the tractor vehicle is a trivial notion, thus implying $\textbf{h}_{i,f}=\textbf{0}$
    
    Hence, $\mathcal{S}_{1,1} = \textbf{0}$
    \begin{align}
        &\implies
        u_{x_1} \sin{(\psi_1 + \theta_{1,1})} - u_{y_1}\cos{(\psi_1 + \theta_{1,1})} = 0 \nonumber\\
        &\implies \begin{array}{c}
                    u_{x_1} = \alpha \cos{(\psi_1 + \theta_{1,1})} \\
                    u_{y_1} = \alpha \sin{(\psi_1 + \theta_{1,1})}
                 \end{array}  
    \end{align}
    Comparing with the top row of constraint equation \eqref{eq:wheel_velocity_constraint}, and substituting $u_{x_1}$ and $u_{y_1}$ from above, 
    \begin{equation}
        v_{w_{1,1}} = \begin{pmatrix}1 & 1\end{pmatrix} R(\psi_1+\theta_{1,1})^\top 
                      \begin{pmatrix} u_{x_1} \\ u_{y_1} \end{pmatrix} = \alpha
    \end{equation}
    
    Hence, it is concluded that $\alpha = v_{w_{1,K_1}}$ is the tractor velocity control.
    Hence, it is shown that the controls $u_{x_1}$ and $u_{y_1}$ are dependent. As a result, the column vectors $J_{x_1}(\textbf{q})$ and $J_{y_1}(\textbf{q})$ cannot be linearly independent. Hence, the matrix $J(\textbf{q})$ and the kinematic controls $u$ are updated in this iteration as follows:
    \begin{align}
        J(\textbf{q}) &= 
            \renewcommand{\arraystretch}{1.2}
                \begin{pNiceArray}{c | c}
                    \begin{matrix}
                        \gamma_{x_1} \\
                        \gamma_{y_1}
                    \end{matrix}   & \mathbf{0} \\ \hline
                    \mathbf{0}     & I_{\left( 1 + K_1\right) }
                \end{pNiceArray}
            \renewcommand{\arraystretch}{1}
            \label{eq:jacobian_update_1} \\
        u &= \begin{bmatrix}
                v_{w_{1,1}} & u_{\psi_1} & u_{\theta_{1,1}} & u_{\theta_{1,2}} & \cdots & u_{\theta_{1,K_1}}   
             \end{bmatrix}^\top \nonumber \\ 
        \text{where, } & \ \ \ \gamma_{x_1} = \cos{(\psi_1 + \theta_{1,K_1})} \nonumber \\
        & \ \ \ \gamma_{y_1} = \sin{(\psi_1 + \theta_{1,K_1})} \nonumber 
    \end{align}

    \item \textbf{Step 2:} Pick wheel $\mathcal{W}_{1,2}$. Using equations \eqref{eq:wheel_velocity_constraint} and \eqref{eq:jacobian_update_1}, the nonholonomic constraint on $\mathcal{W}_{1,2}$ is:
    \begin{align*}
        & A_{1,2}(\textbf{q})\dot{\textbf{q}} = A_{1,2}(\textbf{q})J(\textbf{q}) u = 0 \\
        \implies & \begin{pmatrix}0 & 1\end{pmatrix} R(\psi_1+\theta_{1,2})^\top 
                    \begin{pmatrix}
                        v_{w_{1,1}} \gamma_{x_1} \\
                        v_{w_{1,1}} \gamma_{y_1} 
                    \end{pmatrix}  + \begin{pmatrix}0 & 1\end{pmatrix} \mathcal{S}_{1,2} u_{\psi_1}
        = 0
    \end{align*}
    Substituting $\mathcal{S}_{1,2} = R\left(\frac{\pi}{2}-\theta_{1,2}\right)\textbf{w}_{1,2}\vert_{\{1\}}$ from equation \eqref{eq:wheel_velocity_constraint}, and defining $\textbf{w}_{1,2}\vert_{\{1\}} := (a_{1,2},b_{1,2})$
    \begin{equation*}
        \implies u_{\psi_1} = v_{w_{1,1}}  \dfrac{\sin\left({\theta_{1,2} - \theta_{1,1}}\right)}
                                                { a_{1,2}\cos{\theta_{1,2}} + b_{1,2} \sin{\theta_{1,2}}
                                                }
    \end{equation*}
    Hence, the matrix $J(\textbf{q})$ and the kinematic controls $u$ are updated in this iteration as follows:
    \begin{align}
        J(\textbf{q}) &= 
                \renewcommand{\arraystretch}{1.2}
                \begin{pNiceArray}{c | c }
                            \begin{matrix}
                                \gamma_{x_1} \\ 
                                \gamma_{y_1} \\
                                \gamma_{\psi_1}
                            \end{matrix}            & \mathbf{0}   \\ \hline
                            \mathbf{0}              & I_{\left( \sum_i K_i \right) }
                \end{pNiceArray} 
                \renewcommand{\arraystretch}{1}
             \label{eq:jacobian_update_2} \\
        u &= \begin{bmatrix}
                v_{w_{1,1}} & u_{\theta_{1,1}} &  u_{\theta_{1,2}} & \cdots & u_{\theta_{1,K_1}}   
             \end{bmatrix}^\top \nonumber \\
        &\text{where, }  \ \ \ 
        \gamma_{\psi_1} = \dfrac{\sin\left({\theta_{1,2} - \theta_{1,1}}\right)}
                                { a_{1,2}\cos{\theta_{1,2}} + b_{1,2} \sin{\theta_{1,2}} } \nonumber
    \end{align}

    $\cdots$ \\ 
    $\cdots$ (continue iterations for other wheels on tractor $\mathcal{V}_1$) \\ 
    $\cdots$  

    \item \textbf{Step $k$:} Continuing iterations for the other wheels $\mathcal{W}_{1,k}$, $2 < k \leq K_1$ on vehicle unit $\mathcal{V}_1$.
    Using equations \eqref{eq:wheel_velocity_constraint} and \eqref{eq:jacobian_update_2}, the nonholonomic constraint on $\mathcal{W}_{1,k}$ is:
    \begin{align*}
        &A_{1,k}(\textbf{q})\dot{\textbf{q}} = A_{1,k}(\textbf{q})J(\textbf{q}) u \\
        &= \begin{pmatrix}0 & 1\end{pmatrix} R(\psi_1+\theta_{1,k})^\top 
                    \begin{pmatrix}
                        v_{w_{1,1}} \gamma_{x_1} \\
                        v_{w_{1,1}} \gamma_{y_1} 
                    \end{pmatrix} + \begin{pmatrix}0 & 1\end{pmatrix} \mathcal{S}_{1,k} v_{w_{1,1}} \gamma_{\psi_1}
        = 0
    \end{align*}
    
    \begin{equation*}
        \implies
        \dfrac{\tan{\theta_{1,k}} - \tan{\theta_{1,1}}}
              {\tan{\theta_{1,2}} - \tan{\theta_{1,1}}}
        =
        \dfrac{a_{1,k} + b_{1,k} \tan{\theta_{1,k}}}
              {a_{1,2} + b_{1,2} \tan{\theta_{1,2}}}
    \end{equation*}
    \begin{align}
        \implies
        &\tan{\theta_{1,k}} 
        = \frac{\mathcal{N}}{\mathcal{D}} \label{eq:generalized_ackermann_SUV} \\
        &\text{where, } \nonumber \\
        &\mathcal{N} = (a_{1,2}-a_{1,k}) \tan{\theta_{1,1}}  \nonumber \\
                        & \hspace{0.5in}
                        + ( a_{1,k} + b_{1,2} \tan{\theta_{1,1}}) \tan{\theta_{1,2}}  \nonumber \\
        &\mathcal{D} = {a_{1,2} 
                        + (b_{1,2}-b_{1,k}) \tan{\theta_{1,2}}
                        + b_{1,k} \tan{\theta_{1,1}}}  \nonumber 
    \end{align}

\end{itemize}

 In equation \eqref{eq:generalized_ackermann_SUV}, it is observed that for the nonholonomic constraints related to rolling without slipping to hold for a multi-axle tractor vehicle unit, the steering angles of intermediate axles are dependent on the steering angles of two independent wheels. 
As a result, no independent controls corresponding to the intermediate steering axles/wheels can be admitted to the system. Hence, the controls $u_{\theta_{1,k}}, \ 2<k \leq K_1$ and their corresponding columns in the matrix $J(\textbf{q})$ must be removed since these controls are not kinematically independent.
Hence, equation \eqref{eq:generalized_ackermann_SUV} is nothing but the generalized Ackermann steering condition for kinematic steering in a multi-axle single vehicle unit. Hence, the simplification (triangular matrix form) of $\mathrm{Ker} A(q)$ achieved by eliminating rows corresponding to dependent steering actuators results in no loss of generality as long as the algebraic constraints given by the generalized Ackermann steering law are respected. 

\textit{Note:} 
The above observation does not mean that the dependent steering angles are eliminated from the vector of generalized coordinates $\textbf{q}$ and generalized velocities $\dot{\textbf{q}}$, that is, it must be emphasized again that there is no reduction in the generalized coordinates $\textbf{q}$ as a result of the nonholonomic constraints. The steering angles $\theta_{1,k}, \ 2<k \leq K_1$ are dependent on the two independent steering angles $\theta_{1,1}$ and $\theta_{1,2}$, but are needed to specify the system's configuration uniquely.

The above observations leads to the following proposition:
\begin{proposition} \label{prop:}
No more than three independent kinematic controls can be admitted to a single unit vehicle.
\end{proposition}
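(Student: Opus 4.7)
The plan is to combine a simple dimension count with the iterative kernel construction that has just been presented. For a single-unit vehicle $\mathcal{V}_1$ with $K_1$ wheels, the generalized coordinate vector is $\textbf{q} = (x_1, y_1, \psi_1, \theta_{1,1}, \ldots, \theta_{1,K_1})^\top$, so the instantaneous velocity space has dimension $3 + K_1$. Each wheel $\mathcal{W}_{1,k}$ contributes exactly one scalar Pfaffian constraint via the bottom row of equation \eqref{eq:wheel_velocity_constraint}, yielding a stacked constraint matrix $A_1(\textbf{q})$ with $K_1$ rows. If I can show that these $K_1$ rows are linearly independent at a generic configuration, then rank-nullity gives $\dim\,\mathrm{Ker}\,A_1(\textbf{q}) = (3 + K_1) - K_1 = 3$, which is precisely the statement of the proposition.

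To establish the rank, I would not compute determinants directly; instead I would invoke the iterative construction of Section \ref{sec:genAckermann} verbatim. Step 1 produces a nontrivial relation between $u_{x_1}$ and $u_{y_1}$, collapsing those two scalar controls into the single wheel speed $v_{w_{1,1}}$; this corresponds to one independent row of $A_1$. Step 2 determines $u_{\psi_1}$ as a scalar multiple of $v_{w_{1,1}}$ through the steering angle $\theta_{1,2}$, contributing a second independent row. For each subsequent wheel $\mathcal{W}_{1,k}$ with $2 < k \leq K_1$, equation \eqref{eq:generalized_ackermann_SUV} furnishes an explicit algebraic relation expressing $\tan\theta_{1,k}$ as a rational function of $\tan\theta_{1,1}$ and $\tan\theta_{1,2}$; differentiating in time then forces $u_{\theta_{1,k}}$ to be a linear combination of $(v_{w_{1,1}}, u_{\theta_{1,1}}, u_{\theta_{1,2}})$. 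Each of these steps contributes a fresh independent row, so $\mathrm{rank}\,A_1(\textbf{q}) = K_1$ and the three surviving kinematic controls are $u = (v_{w_{1,1}}, \omega_{1,1}, \omega_{1,2})^\top$, namely the reference wheel speed together with the two independently commandable steering rates.

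The step I expect to be most delicate is the genericity caveat. The denominator $\mathcal{D}$ in \eqref{eq:generalized_ackermann_SUV} can vanish on a measure-zero subset of $\mathcal{C}_r$ (for example when all steering angles align so that the instantaneous centre of rotation escapes to infinity, or when the chosen body frame origin coincides with another wheel and $\mathcal{S}_{1,k} = \mathbf{0}$), and at such singular configurations the rank of $A_1$ drops and a different triple of independent controls must be selected. I would therefore state the proposition as holding on the open dense subset of $\mathcal{C}_r$ where $A_1(\textbf{q})$ achieves full row rank, and remark that at singular configurations the bound of three independent controls remains valid by upper semi-continuity of the rank function, while the identity of the three controls may change. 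This completes the argument: the two-translation-plus-one-rotation structure of planar rigid motion, together with the dependency of all intermediate steering wheels on the two commanded ones, caps the admissible kinematic inputs at three.
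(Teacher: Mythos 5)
Your conclusion, and your identification of the surviving controls $(v_{w_{1,1}},\omega_{1,1},\omega_{1,2})$, agree with the paper, but the rank--nullity argument you build the proof on does not work. Inspect the structure of a row $A_{1,k}(\textbf{q})$ in equation \eqref{eq:wheel_velocity_constraint} (or \eqref{eq:A1_block}): the block multiplying the steering rates $\dot{\Theta}$ is identically $\textbf{0}$, so every one of the $K_1$ rows of $A_1(\textbf{q})$ is supported only on the three columns corresponding to $(\dot{x}_1,\dot{y}_1,\dot{\psi}_1)$. Consequently $\mathrm{rank}\,A_1(\textbf{q})\le 3$ no matter how large $K_1$ is, and your claim that the $K_1$ rows are generically independent is impossible once $K_1>3$. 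Worse, at any configuration admitting nontrivial chassis motion all rows must annihilate a common nonzero twist, so the rank is at most $2$ and $\dim\mathrm{Ker}\,A_1(\textbf{q})\ge (3+K_1)-2=K_1+1$, not $3$. The extra $K_1-2$ kernel directions are exactly the unconstrained rates $\dot{\theta}_{1,k}$, $k>2$, which the matrix $A_1$ alone does not see.

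The relations that eliminate those extra directions are not rows of $A_1(\textbf{q})$: they are the time derivatives of the algebraic Ackermann conditions \eqref{eq:generalized_ackermann_SUV}, i.e.\ of the holonomic constraints that must hold for the Pfaffian system to remain consistently solvable along a trajectory. You do invoke them (``differentiating in time forces $u_{\theta_{1,k}}$ to be a linear combination\ldots''), but you then count them as fresh independent rows of $A_1$, which conflates the Pfaffian matrix with an augmented constraint system and makes the bookkeeping $(3+K_1)-K_1=3$ only accidentally land on the right number. The paper avoids this by doing the count on the reduced configuration manifold $\mathcal{C}_r$: only $\theta_{1,1}$ and $\theta_{1,2}$ are retained as independent coordinates (so the velocity space has dimension $5$), only the two corresponding wheel rows are retained as Pfaffian constraints, and the kernel displayed in equation \eqref{eq:single_unit_vehicle} is then genuinely three-dimensional, with the remaining steering angles recovered algebraically from \eqref{eq:generalized_ackermann_SUV}. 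If you want to keep a rank--nullity flavour, apply it to the augmented system (two wheel rows plus the $K_1-2$ differentiated Ackermann relations) acting on all $3+K_1$ generalized velocities. Finally, your closing semicontinuity remark points the wrong way: rank is lower semicontinuous, so at singular configurations the nullity can only jump up, which by itself would not preserve an upper bound of three.
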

\begin{proof}
    For a general single unit vehicle $(N=1)$ with $K_1$ wheels, the algorithm is terminated at Step $k$, and leads to the following velocity kinematics are obtained:
    \begin{equation}
        \dot{\textbf{x}} = \begin{pmatrix}
                     \dot{x}_1 \\ \dot{y}_1 \\ \dot{\psi}_1 \\ \dot{\theta}_{1,1} \\ \dot{\theta}_{1,2}
                  \end{pmatrix}
                  = J(\textbf{x})u
                  = \renewcommand{\arraystretch}{1.2}
                    \begin{pNiceArray}{c | c }
                                \begin{matrix}
                                    \gamma_{x_1} \\ 
                                    \gamma_{y_1} \\
                                    \gamma_{\psi_1}
                                \end{matrix}            & \mathbf{0}   \\ \hline
                                \mathbf{0}              & I_{2}
                    \end{pNiceArray} 
                    \renewcommand{\arraystretch}{1}
                    \begin{pmatrix}
                        v_{w_{1,K_1}} \\ \omega_{1,1} \\ \omega_{1,2}
                    \end{pmatrix}
        \label{eq:single_unit_vehicle}
    \end{equation}
    Hence, it is evident from the kinematics that despite having arbitrary number of steerable wheels, there can be at most three independent kinematic controls in a single unit vehicle since all dependent steering angles $\theta_{1,k}. \ 2<k\leq K_1$ are defined using equation \eqref{eq:generalized_ackermann_SUV}. In equation \eqref{eq:single_unit_vehicle}, $\textbf{x}$ denotes the generalized coordinates corresponding to the reduced configuration manifold $\mathcal{C}_r$ for which the kernel computation is feasible for a single unit vehicle.
\end{proof}

\begin{corollary}
    Ackermann steering condition: Consider a rigid body vehicle of wheelbase $L$ and track width $T$ with two axles, with the front axle being steerable. The vehicle is taking a left turn of radius $R$. The steering angles on the left and right steering wheels are:
    \begin{equation}
        \tan{\theta_{f,l}} = \dfrac{L}{R - \frac{T}{2}} 
        \hspace{0.2in} , \hspace{0.2in} 
        \tan{\theta_{f,r}} = \dfrac{L}{R + \frac{T}{2}}
        \label{eq:ackermann_SUV}
    \end{equation}
\end{corollary}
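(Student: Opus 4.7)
The plan is to specialize the generalized Ackermann law \eqref{eq:generalized_ackermann_SUV} to a standard two-axle vehicle and combine it with the instantaneous-center-of-rotation (ICR) argument that underpins the nonholonomic rolling constraints. Four wheels are in play: rear-left, rear-right (both unsteered) and front-left, front-right (steerable, with angles $\theta_{f,l}$ and $\theta_{f,r}$ to be determined). Following the setup used in the proof of Proposition~\ref{prop:}, I would place the body frame $\{1\}$ so that its origin coincides with the rear-left wheel, which then plays the role of $\mathcal{W}_{1,1}$ with $\theta_{1,1}=0$; the front-left wheel, located at $(a_{1,2},b_{1,2})=(L,0)$, plays the role of the second independent wheel $\mathcal{W}_{1,2}$ with $\theta_{1,2}=\theta_{f,l}$. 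The rear-right wheel then sits at $(0,-T)$ and the front-right at $(L,-T)$.

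The next step is to apply \eqref{eq:generalized_ackermann_SUV} to each of the two dependent wheels. For the rear-right wheel, the substitutions $a_{1,k}=0$, $b_{1,k}=-T$ and $\theta_{1,1}=0$ annihilate the numerator and correctly return $\tan\theta_{rr}=0$, confirming consistency with the rigid rear axle. For the front-right wheel the substitutions $a_{1,k}=L$, $b_{1,k}=-T$, $\theta_{1,1}=0$ collapse \eqref{eq:generalized_ackermann_SUV} to
$$\tan\theta_{f,r}\;=\;\frac{L\tan\theta_{f,l}}{L+T\tan\theta_{f,l}}.$$

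Finally I would tie the steering angles to the turning radius $R$ via the ICR. The nonholonomic wheel constraints \eqref{nonholonomic_wheel} force every wheel axis to pass through a common instantaneous center of rotation; since both rear wheels are unsteered, the ICR must lie on the extended rear-axle line, and by definition $R$ is its perpendicular distance from the vehicle centerline. The front-left wheel therefore sits at longitudinal offset $L$ and lateral offset $R-T/2$ from the ICR, and tangency to its arc forces $\tan\theta_{f,l}=L/(R-T/2)$, which is the first identity in \eqref{eq:ackermann_SUV}. Substituting this value into the displayed relation for $\tan\theta_{f,r}$ above and simplifying (cancel a factor $L$, combine the denominator over $R-T/2$) produces $\tan\theta_{f,r}=L/(R+T/2)$, the second identity.

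The main obstacle is notational rather than analytical: pinning down which wheel is designated $\mathcal{W}_{1,1}$ and which $\mathcal{W}_{1,2}$, and then re-parametrizing the two-argument relation \eqref{eq:generalized_ackermann_SUV} by the single geometric parameter $R$. An entirely ICR-based derivation via right-triangle trigonometry at each front wheel is also available and serves as a useful cross-check; the merit of routing through \eqref{eq:generalized_ackermann_SUV} is that it displays the classical Ackermann law as a direct specialization of the generalized steering condition of Proposition~\ref{prop:}.
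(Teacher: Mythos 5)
Your proposal is correct, and the algebra checks out: with $\theta_{1,1}=0$ at the rear-left wheel, $(a_{1,2},b_{1,2})=(L,0)$, $\theta_{1,2}=\theta_{f,l}$, and $(a_{1,k},b_{1,k})=(L,-T)$, equation \eqref{eq:generalized_ackermann_SUV} does reduce to $\tan\theta_{f,r}=L\tan\theta_{f,l}/(L+T\tan\theta_{f,l})$, which together with $\tan\theta_{f,l}=L/(R-T/2)$ yields the claimed formulas. However, your decomposition differs from the paper's. The paper takes the two \emph{lumped} (bicycle-model) wheels as the independent pair --- the lumped rear wheel at the origin with zero steering angle and the lumped front wheel at $(L,0)$ with $\tan\theta_f=L/R$ --- and then treats \emph{both} physical front wheels symmetrically as dependent wheels at $(L,\mp T/2)$, so that each of the two identities in \eqref{eq:ackermann_SUV} falls out of a single substitution into \eqref{eq:generalized_ackermann_SUV}. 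You instead take two physical wheels (rear-left, front-left) as the independent pair, obtain only the right-hand steering angle from the generalized law (effectively deriving the classical relation $\cot\theta_{f,r}-\cot\theta_{f,l}=T/L$ as an intermediate), and must then supply a separate instantaneous-center-of-rotation computation to pin down $\theta_{f,l}$ in terms of $R$. Both routes require one external ICR/geometric input to introduce $R$ (the paper needs $\tan\theta_f=L/R$ for the virtual centerline wheel, you need $\tan\theta_{f,l}=L/(R-T/2)$ for a physical wheel), so neither is more self-contained; the paper's choice is slightly cleaner in that it treats the two front wheels on an equal footing and connects naturally to its later discussion of virtual steering angles, while yours has the merit of exhibiting the classical two-wheel Ackermann compatibility relation explicitly and of using \eqref{eq:generalized_ackermann_SUV} to verify consistency of the unsteered rear axle.
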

\begin{proof}
    The front wheels are located at coordinates in the vehicle body frame: left wheel $\mathcal{W}_{f,l}: (L,-T/2)$ and right wheel $\mathcal{W}_{f,r}: (L,T/2)$.
    The steering angle $\theta_f$ at the lumped, hypothetical front wheel located at $(L,0)$ the middle of front axle is given by: $\tan{\theta_f} = L/R$ (refer Figure \ref{fig:ackermann_car}). 
    
    \begin{enumerate}
        \item For lumped front wheel, $a_{1,1} = L$, $b_{1,1} = 0$, $\theta_{1,1} = \theta_f$.
        \item For lumped rear wheel, $a_{1,K_1} = 0$, $b_{1,K_1} = 0$, $\theta_{1,K_1} = 0$.
        \item For front left wheel, $a_{1,k} = L$, $b_{1,k} = -T/2$, $\theta_{1,k} = \theta_{f,l}$.
        \item For front right wheel, $a_{1,k} = L$, $b_{1,k} = T/2$, $\theta_{1,k} = \theta_{f,r}$.
    \end{enumerate}

    \begin{figure}
        \centering
        \includegraphics[width=0.7\linewidth]{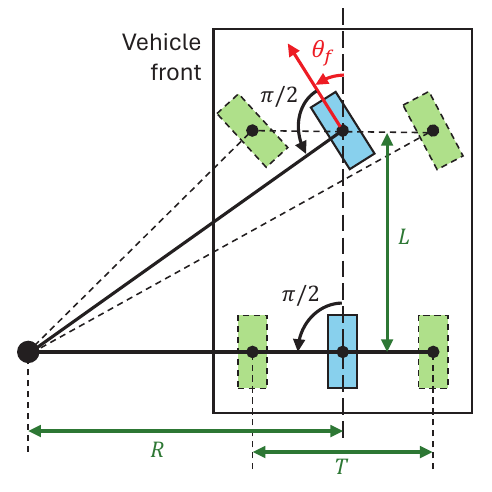}
        \caption{Ackermann steering for a car with two axles and front wheel steering}
        \label{fig:ackermann_car}
    \end{figure}
    
    By substituting the above values in equation \eqref{eq:generalized_ackermann_SUV}, the result from equation \eqref{eq:ackermann_SUV} is obtained.
\end{proof}

\subsection{Generalized Ackermann Steering: Extension to $n$-Trailers}
The iterative kernel computation procedure described for a single unit vehicle can be applied to nonholonomic constraints described in equation \eqref{eq:Ai_block} for trailers in a similar manner. 
Thus, each eliminated row in equation \eqref{eq:wheel_velocity_constraint} leads to a $n$-trailer generalization of the Ackermann steering condition \cite{genta} shown in equation \eqref{eq:generalized_ackermann}, allowing one to compute steering angle of dependent wheel located at $\textbf{w}_{i,k}\vert_{\{i\}}$ from the generalized velocities $\dot{\textbf{x}}$ obtained from kinematic model, given the independently controllable steering angles.
\begin{equation}
    \begin{bmatrix}
        0 & 1      
    \end{bmatrix} 
    \left(
        R(\psi_i + \theta_{i,k})^\top \begin{bmatrix}
            \dot{x}_1 \\ \dot{y}_1
        \end{bmatrix}
        +
        \sum_{m=1}^{i-1} \mathcal{I}_{i,m,k} \dot{\psi}_m
        +
        \mathcal{S}_{i,k} \dot{\psi}_i
    \right)
    = 0
    \label{eq:generalized_ackermann}
\end{equation}

\subsection{Virtual Steering Angles: Physical Interpretation of Generalized Ackermann Steering Law}
As a consequence of the generalized Ackermann steering law, a necessary condition for the kinematic vehicle model to be a reasonable representation of an actual vehicle is the \textit{no-slip condition}, which ensures that all points on a rigid body are rotating about an instantaneous center of rotation as illustrated in Figure \ref{fig:kinematic_vehicle}. Hence, using the generalized Ackermann Steering Law (equation \eqref{eq:generalized_ackermann}), the virtual steering angle for the can be computed for any point on the rigid vehicle unit by substituting the coordinates of a physical wheel $\textbf{w}_{i,k}\vert_{\{i\}}$ with the coordinates of any point referred to the body frame of reference.

 \begin{figure}
     \centering
     \includegraphics[trim=0cm 0cm 0cm 0cm, clip=true, width=\linewidth]{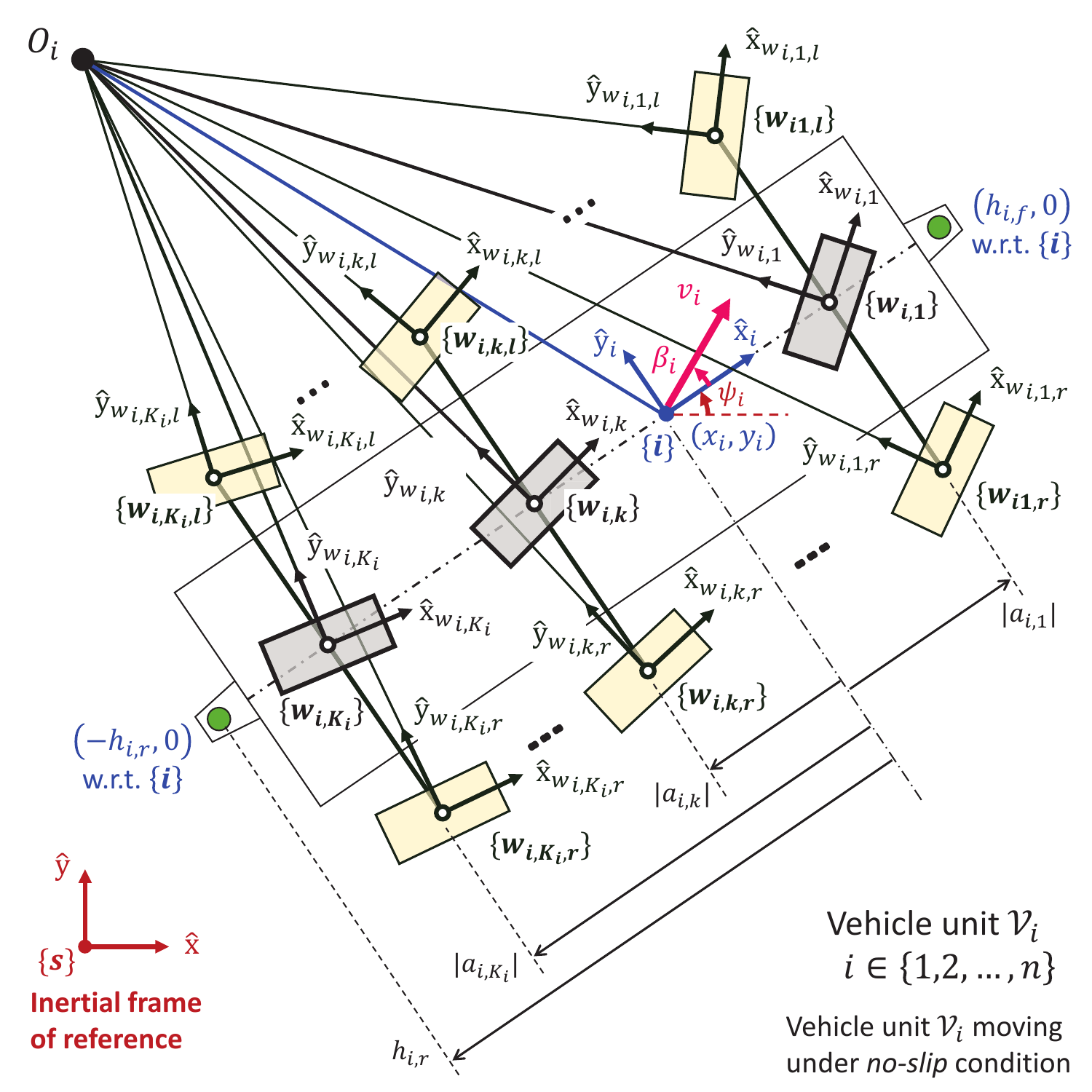}
     \caption{Illustration: Generalized Ackermann steering law for a vehicle unit: Under no-slip condition, the $\hat{\mathrm{y}}$-axes for all wheel frames of reference in a multi-axle vehicle unit $\mathcal{V}_i$ shall pass through the instantaneous center of rotation $O_i$. For a laterally symmetrical chassis unit as shown here, the bicycle approximation provides an accurate representation of the steering geometry.}
     \label{fig:kinematic_vehicle}
 \end{figure}
Tire slip is ignored for a vehicle driving under no-slip condition. Hence, the left and right wheels can be lumped into one wheel located on the vehicle's longitudinal axis to give the bicycle approximation (Figure \ref{fig:kinematic_vehicle}). The bicycle approximation is an accurate representation of the Ackermann-steered laterally symmetrical vehicle unit driving without slipping. The generalized Ackermann steering law also provides a way to determine the virtual steering angles for the lumped wheels for a general $n$-Trailer with laterally symmetrical chassis units.

The \textit{virtual hitch steering angle} is obtained by substituting $\textbf{w}_{i,k}\vert_{\{i\}}$ with hitch coordinates $\textbf{h}_{i,r}$ or $\textbf{h}_{i,f}$ in equation \eqref{eq:generalized_ackermann}. 
The hitch $H_{i-1,i}$ being part of the rigid vehicle unit $\mathcal{V}_{i-1}$ has velocity normal to the line connecting the hitch point to the instantaneous center of rotation $O_{i-1}$. As a result, the hitch can be thought of as a virtual steering that applies a steering control on the trailer vehicle unit $\mathcal{V}_i$.
As a consequence of the virtual hitch steering being defined by the preceding tractor unit, the generalized Ackermann law allows for at most one independent steering actuator kinematic control to be be admitted to a trailer unit.

\section{Model Validation and Results}
Validation of the proposed modeling approach has been performed using two different strategies outlines in Figure \ref{fig:outline_partial_experimental} and Figure \ref{fig:outline_dyn_vs_kin} respectively: 
\begin{enumerate}
    \item \textbf{Strategy 1:} (refer Figure \ref{fig:outline_partial_experimental}) The first strategy involves collecting kinematic control data from a passenger vehicle on real-world scenarios. The passenger vehicle's actual trajectory is logged using a high-precision GPS receiver. The kinematic control data is then used to study the yaw rate and trajectory of articulated vehicle configurations with virtual trailers. This approach allows a thorough analysis of model performance in real-world scenarios at a purely kinematic level.
    The trailers, being hypothetical is not detrimental to the analysis at hand because the models being kinematic in nature, only describe the velocity relationships for the various vehicle units and the hitches, and ignore any reaction forces that an actual trailer would exert on the tractor via the hitch joint. It is being assumed that the tractor's powertrain and front wheel steering is capable of providing the reference velocity and steering rate profiles logged from the onboard vehicle sensors corresponding to typical low-speed urban driving behavior, regardless of the number and mass of trailers connected to the vehicle.
    \item \textbf{Strategy 2:} (refer Figure \ref{fig:outline_dyn_vs_kin}) The second strategy involved comparing the response of the first-order kinematic model against a higher-order dynamic model of a truck-trailer system. The dynamic model being used for this validation strategy is a detailed and realistic model comprising of 6-Degree-Of-Freedom (DOF) chassis for the tractor unit, trailer unit and the hitch. Additionally, the dynamic model offers a reasonable representation of the suspension and tire behavior. This strategy allows validating the reduced order kinematic models against uncertainties and disturbances such as sensor noise, joint friction and actuator delays, which is challenging using the real-world data logged from vehicle CAN bus. One drawback of this strategy is the inability to simulate real-world scenarios involving road geometries and human driver commands.
\end{enumerate}
\begin{figure}
    \centering
    \includegraphics[width=\linewidth]{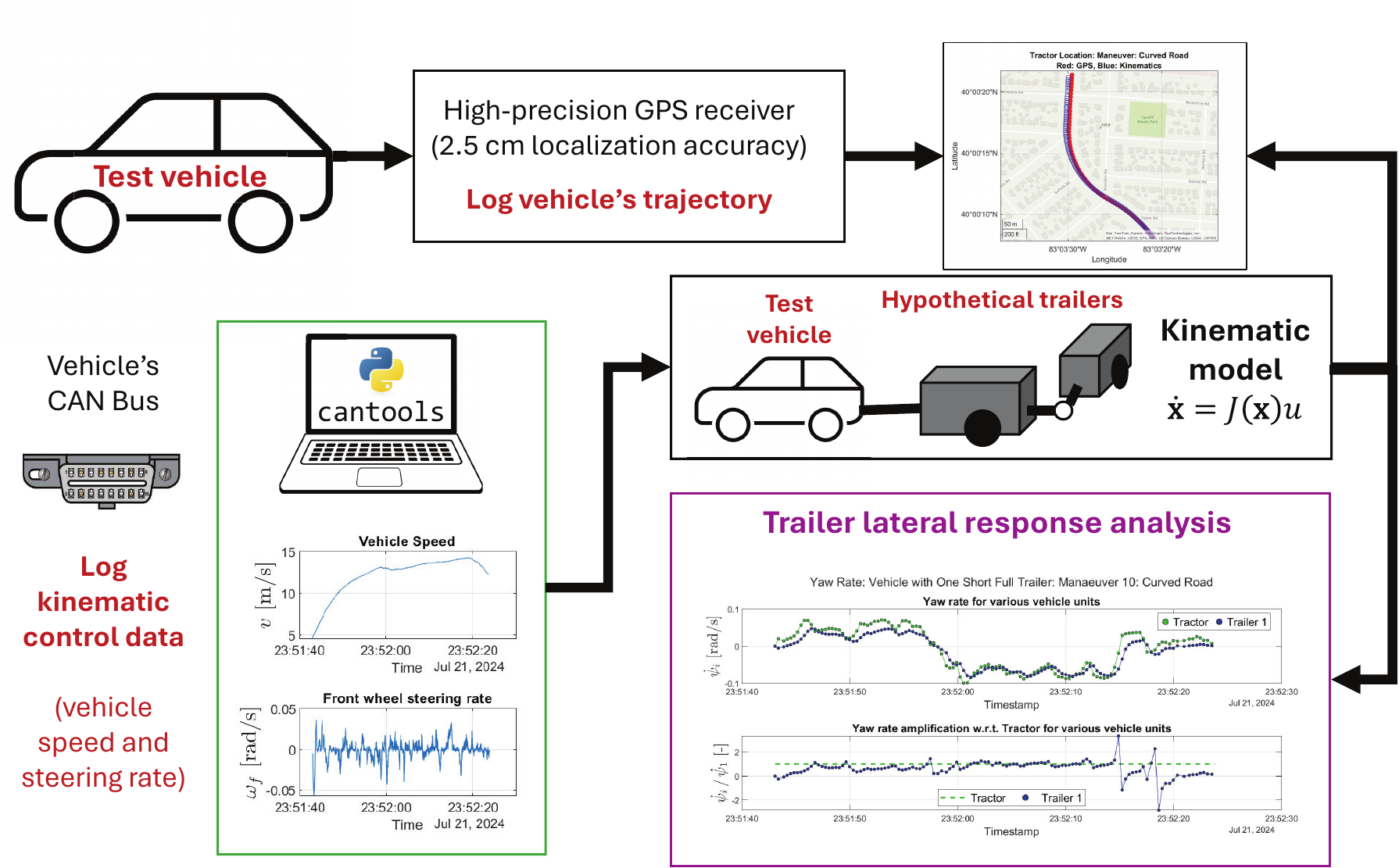}
    \caption{Partial experimental validation: Evaluating response of the kinematic model to real-world driver commands}
    \label{fig:outline_partial_experimental}
\end{figure}

\begin{figure}
    \centering
    \includegraphics[width=\linewidth]{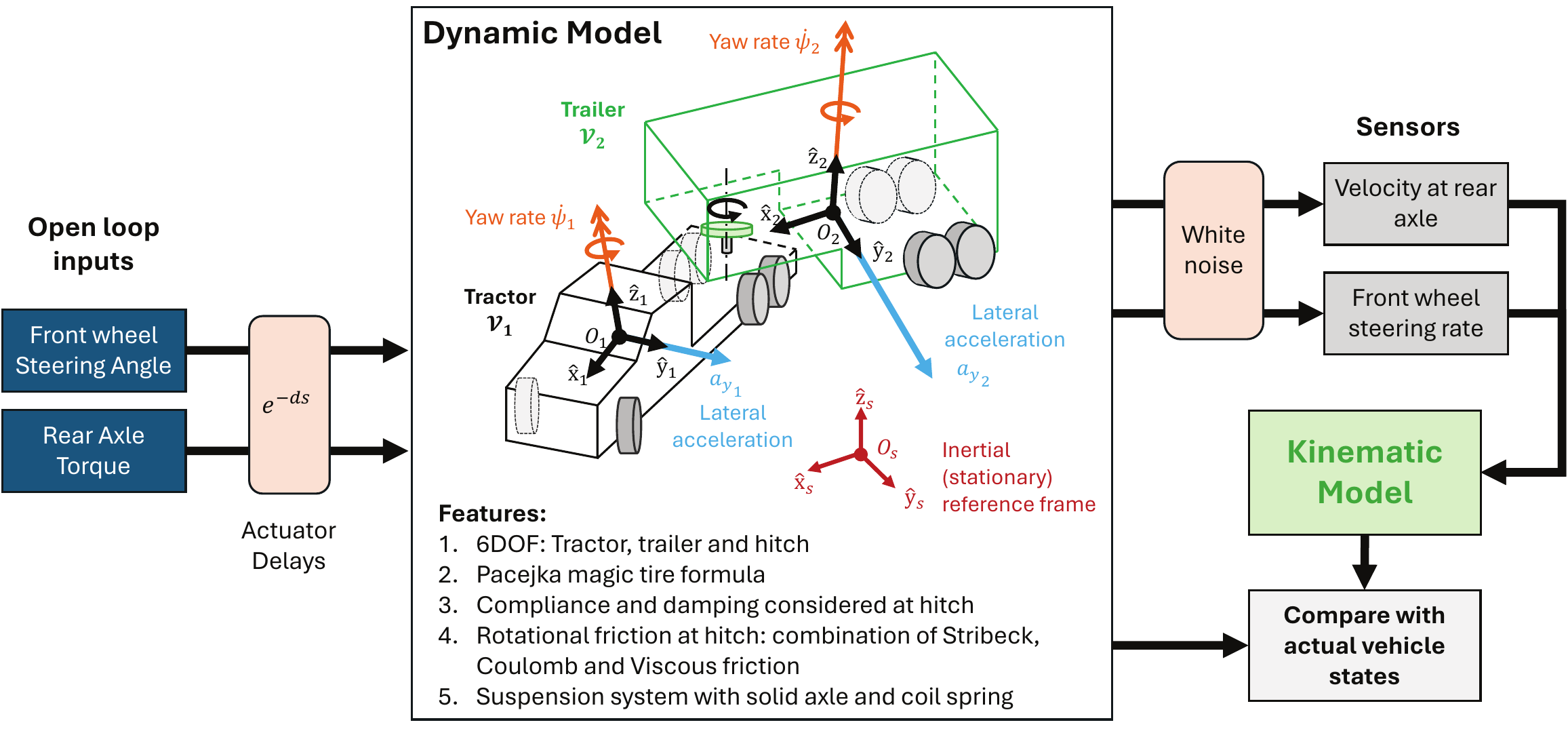}
    \caption{Outline of simulation-based validation: high-fidelity dynamic model vs reduced order kinematic model}
    \label{fig:outline_dyn_vs_kin}
\end{figure}

Detailed results for model validation using these two strategies are presented in the subsequent subsections.

\subsection{Validation Strategy 1: Partial Experimental Validation Using Response of Kinematic Models with Hypothetical Trailers to Real-World Driver Data}
Data obtained from a single-unit passenger car's (test vehicle) CAN (Controller Area Network) bus is used as input to the kinematic model in equation \eqref{eq:kinematic_model}. The following quantities are logged from the CAN bus: Steering wheel rate $\omega_f = \omega_{1,2}$, Speed at the rear axle $v$ as shown in Figure \ref{fig:stateEvolution_sc1}. The vehicle body frame of reference for the tractor unit is fixed at the rear axle, which is not steered, thus $\omega_{1,1} = 0$. The vehicle onboard sensors used in this study are:
\begin{enumerate}
    \item Hand wheel steering angle rate [deg/s] sensor's measurement is divided by vehicle steering gear train's steering ratio to obtain the steering rate at the vehicle's front wheel.
    \item Speed at the rear axle [km/h]: The vehicle's ECU computes vehicle speed internally from the wheel speed by using a nominal wheel radius. 
\end{enumerate}

\begin{figure}
    \centering
    \begin{subfigure}[b]{0.9\linewidth}
        \centering
        \includegraphics[width=0.9\linewidth]{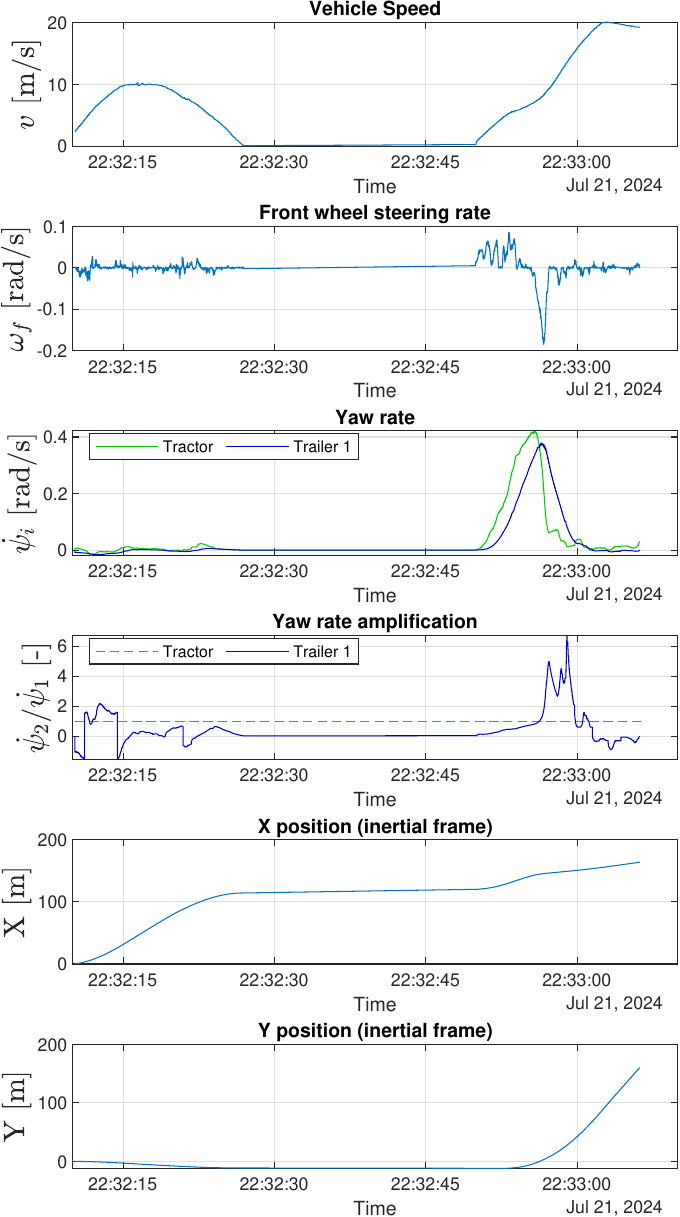}
        \caption{(Top to bottom) (1.,2.) Kinematic control data collected from single-unit test vehicle's CAN bus. Data collected: (1.) velocity of tractor's rear axle and (2.) front wheel steering rate. \\
        (3.) Yaw rate of the tractor and trailer \\ 
        (4.) Rearward yaw rate amplification response of the simulated trailer \\
        (5.,6.) Vehicle trajectory with respect to inertial frame of reference.}
        \label{fig:stateEvolution_sc1}
    \end{subfigure}
    \vskip 0.1in
    \begin{subfigure}[b]{0.9\linewidth}
        \centering
        \includegraphics[width=0.8\linewidth]{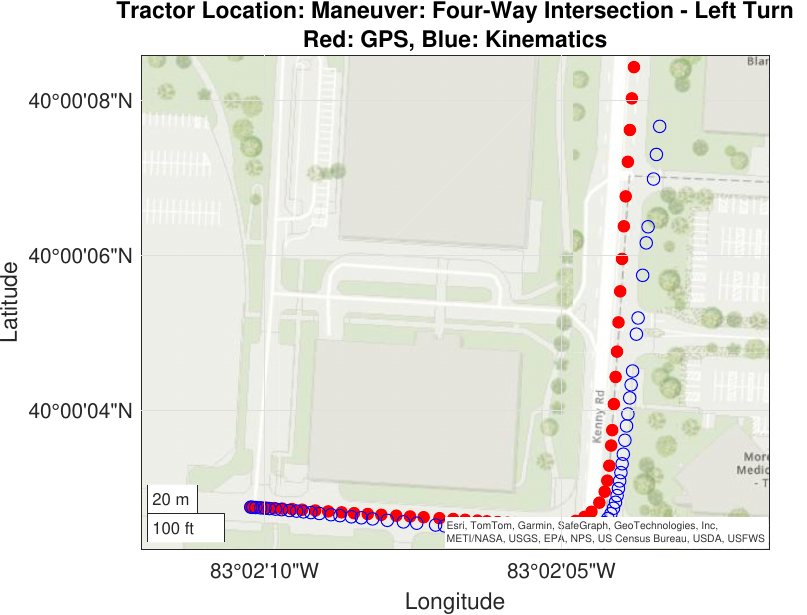}
        \caption{Path of the single-unit test vehicle computed using kinematic model as compared to its actual path obtained from GPS.}
        \label{fig:tracPos_sc1}
    \end{subfigure}
    \caption{Simulated kinematic behavior of a one-trailer vehicle system executing a left turn maneuver at a four-way intersection}
    \label{fig:test_scenario}
\end{figure}

The vehicle is driven on a test scenario with a variable  speed profile and its actual path is logged using a high-precision NovAtel PwrPak7D GPS/GNSS receiver having an accuracy of 2.5 cm. The kinematic model is solved using numerical integration and the trajectory $(x_1(t),y_1(t))$ is compared to the 
GPS trajectory in Figure \ref{fig:tracPos_sc1}. 
A drift between kinematic trajectory and GPS trajectory is noticed since tire forces and slip effects are ignored by the model. However, ignoring these effects leads to an elegant first-order model structure which is useful to understand the controllability properties by calculating Lie brackets of various columns of $J(\textbf{x})$ \cite{laumond_controllability,general_n_trailer_properties}, and for developing motion planning algorithms such as the work presented in \cite{Orosco_modeling_feedback_linearization_n_trailer,flatness_motion_planning_1,flatness_motion_planning_2}.

\begin{figure}
    \centering
    \includegraphics[width=0.9\linewidth]{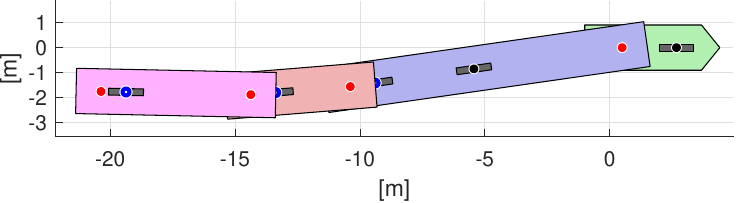}
    \caption{Test vehicle configuration simulated with three virtual trailers}
    \label{fig:vehicleConfig}
\end{figure}

\begin{figure}
        \centering
        \includegraphics[width=\linewidth]{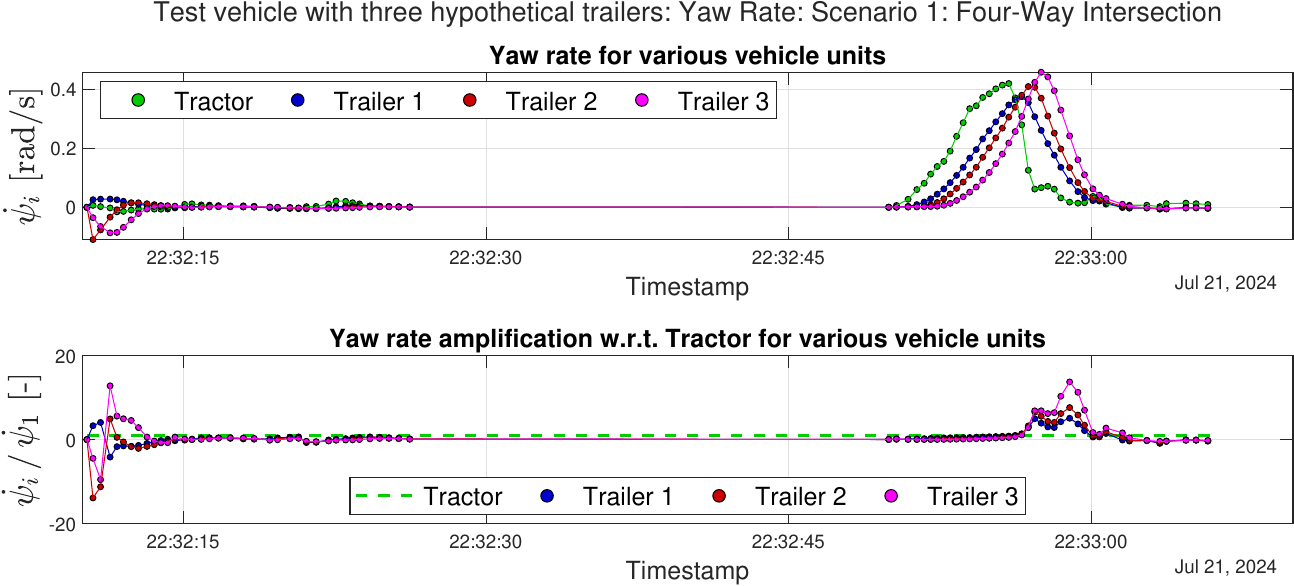}
        \caption{Three trailers}
    \caption{Test vehicle pulling three simulated trailers: Yaw rate amplification response. The lag in the yaw rate worsens as more trailers are added leading to a spike in rearward yaw rate amplification.}
    \label{fig:yawAmp}
\end{figure}

\begin{figure}
    \centering
    \begin{subfigure}[b]{0.48\linewidth}
        \centering
        \includegraphics[width=0.98\linewidth]{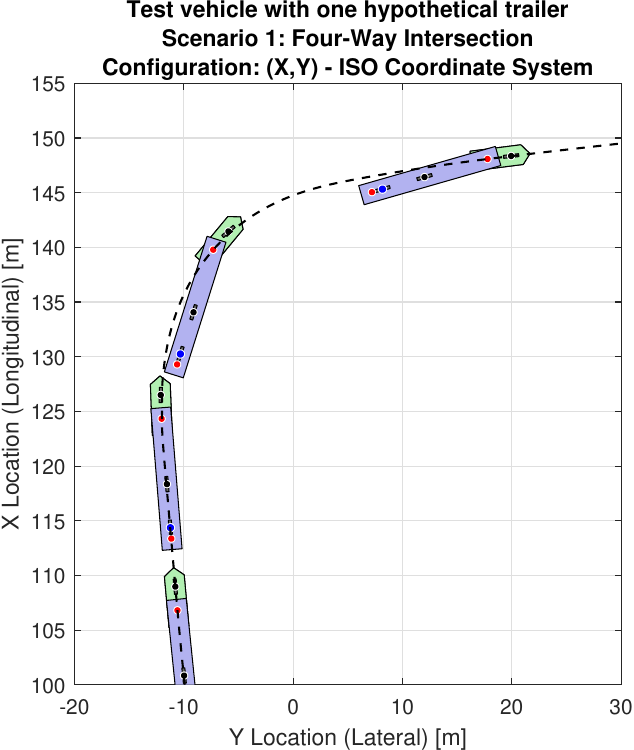}
        \caption{One trailer: The trailer deviates from the tractor's trajectory (dotted line) while taking a turn}
        \label{fig:XYPlot_1T_sc1}
    \end{subfigure}
    \hfill
    \begin{subfigure}[b]{0.48\linewidth}
        \centering
        \includegraphics[width=0.98\linewidth]{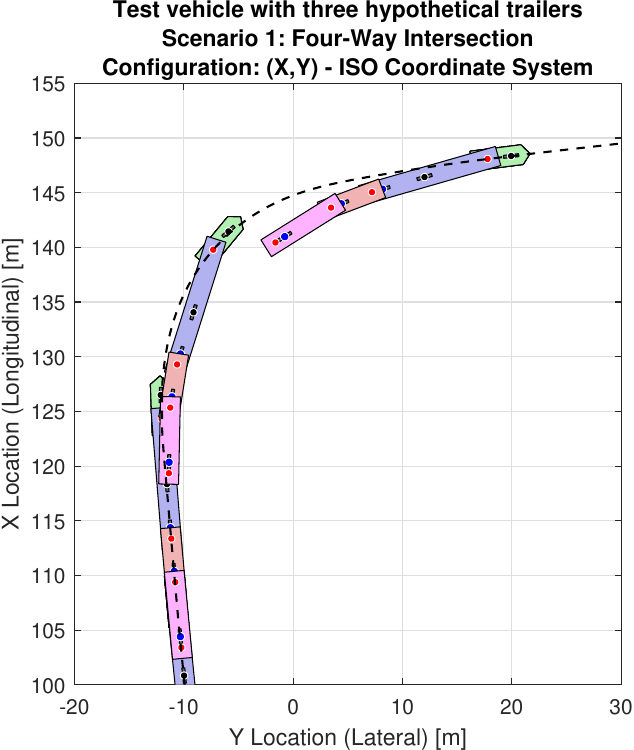}
        \caption{Three trailers: Offtracking is more pronounced for the trailers at the rear, especially while exiting the turn}
        \label{fig:XYPlot_3T_sc1}
    \end{subfigure}
    \caption{Offtracking response of the test vehicle pulling one vs. three simulated trailers. }
    \label{fig:offtracking}
\end{figure}

For the test vehicle with one simulated \textit{virtual} trailer, it is noticed in Figure \ref{fig:stateEvolution_sc1} that the yaw rate of the trailer unit lags behind the yaw rate of the tractor unit, thus leading to a spike in the rearward yaw rate amplification of the trailer with respect to the tractor. 
The test vehicle is also simulated with multiple \textit{virtual} trailers as shown in Figure \ref{fig:vehicleConfig}.
The yaw rate and rearward amplification of the trailers with respect to the tractor $\mathrm{RWA}_{1,j}$ for $j\in\{1,2,3\}$ are computed from the kinematic models and are studied in Figure \ref{fig:yawAmp}. The yaw rate of the trailing vehicle lags behind the yaw rate of the vehicle in front of it. As a result, the rearward yaw rate amplification $\mathrm{RWA}_{1,j}$ gets exacerbated as $j$ increases when more trailers are added to the vehicle system. These results for rearward amplification are consistent with the literature \cite{trigell_truck_trailer_dynamics}, thus partially validating the modeling approach.

Another kinematic behavior observed in the articulated vehicle system is that of offtracking \cite{trigell_truck_trailer_dynamics,altafini_reduced_offtracking}. Offtracking refers to the difference between the trajectories of various wheels and vehicle units in the vehicle system. It is noticed in Figure \ref{fig:offtracking} that the addition of more trailers worsens the offtracking behavior since the rearmost trailer deviates most profoundly from the desired path indicated by the dotted line.

\subsection{Validation Strategy 2: Comparison Against a High-Fidelity Dynamic Model}
\begin{figure}
    \centering
    \includegraphics[width=\linewidth]{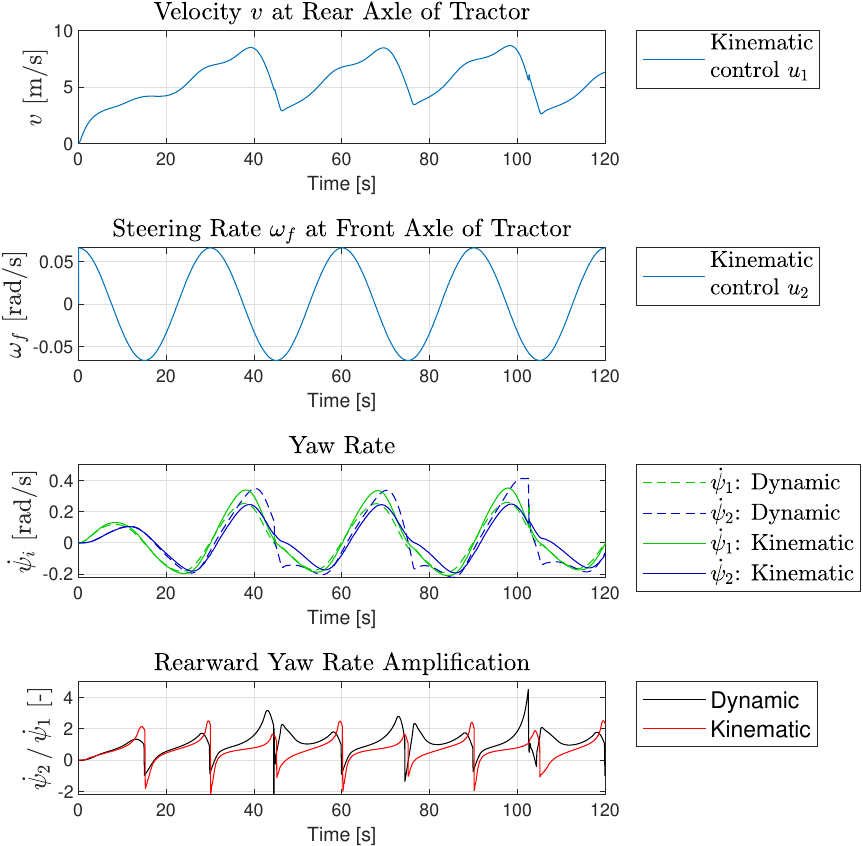}
    \caption{Dynamic model vs Kinematic model: response to open loop axle torque and sine steer inputs. The kinematic model's predicted yaw rate response is a reasonable representation at low speeds and low steering rate, but deviates from actual states under high-slip conditions.}
    \label{fig:Dyn_vs_Kin}
\end{figure}
While Validation strategy 1 gave some insight into the kinematic behavior of multi-trailer vehicles, it does not give much insight into the robustness of the kinematic models under non-ideal conditions. In order to validate the robustness and limitations of the models derived using the proposed symbolic algorithm, a high-fidelity dynamic model developed using the MathWorks Vehicle Dynamics Blockset \cite{vdbs} is utilized as described in the outline shown in Figure \ref{fig:outline_dyn_vs_kin}. The dynamic model has the following features: 

\begin{enumerate}
    \item 6-DOF chassis dynamics for tractor, trailer and hitch are considered.
    \item Tires are modeled using Pacejka magic tire formula \cite{pacejka}.
    \item The hitch forces are modeled Compliance and damping is considered for longitudinal and lateral forces on the hitch.
    \item Rotational friction at the hitch is modeled as a combination of Stribeck, Coulomb and Viscous friction \cite{armstrong_friction,MathWorks_RotationalFriction}.
\end{enumerate}

The dynamic model is given an open-loop inputs of axle torque (accelerator pedal input) to the tractor unit's powertrain, along with a sine steer reference steering angle. The dynamic model gives a realistic estimate of the velocity of the tractor at the rear axle and the steering rate at the front axle. These estimates are used by the kinematic model as control inputs in order to estimate the yaw rate response of the vehicle system. The kinematic model's estimated yaw rate response and rearward yaw rate amplification are compared against the dynamic model's realistic estimates in Figure \ref{fig:Dyn_vs_Kin}. It is noted that while the kinematic model accurately represents the yaw rate response under low-slip conditions, its estimate deteriorates under high-slip conditions. As a result, it is recommended to use kinematic models only for open-loop control problems such as motion planning and trajectory optimization, while more accurate representation of environmental factors that cause slip are ideal for stability control applications.

\subsubsection{Impact of uncertainties, joint friction, actuator delays, and sensor noise on predictive capability of the forward kinematics models}
\begin{figure}
    \centering
    \includegraphics[width=\linewidth]{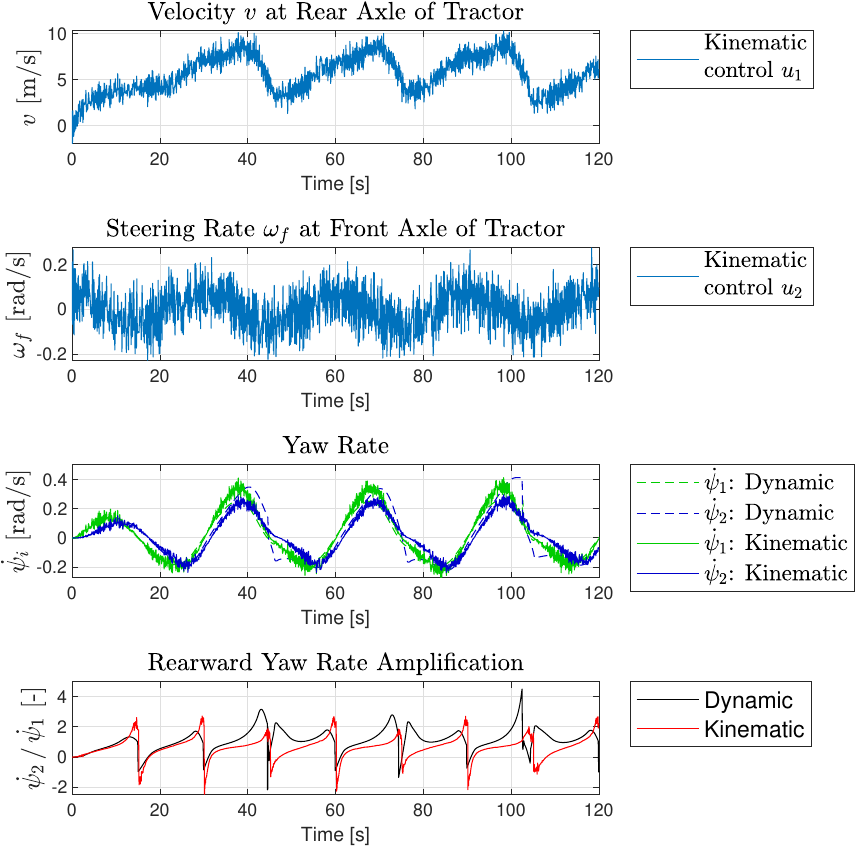}
    \caption{Dynamic model vs Kinematic model: effect of sensor noise. While the yaw rate response becomes noisier due to sensor noise, the rearward yaw rate amplification estimate is relatively noise-free.}
    \label{fig:Dyn_vs_Kin_noise}
\end{figure}
The following assumptions are generally employed when building kinematic models of vehicle systems: all vehicle chassis units are assumed to be rigid, tire slip is negligible, and friction between joints is ignored.
Due to the nature of assumptions, kinematic models are prone to losing their predictive capabilty due to environmental uncertainties like sensor noise, joint friction, tire slip, actuator delays that affect the plant system for which these models are used for control synthesis. Hence, in this subsection, an analysis is conducted to evaluate the robustness and limitations of these kinematic models, especially for representing yaw rate response.

In order to evaluate the kinematic model's robustness to sensor noise, white noise is added to the sensors that measure the kinematic controls (vehicle speed and front wheel steering rate). As observed in Figure \ref{fig:Dyn_vs_Kin_noise}, the kinematic model's estimation of the rearward yaw rate amplification is comparable to the noise-free case (Figure \ref{fig:Dyn_vs_Kin}) despite having a noisier yaw rate estimate.

\begin{figure}
    \centering
    \includegraphics[width=\linewidth]{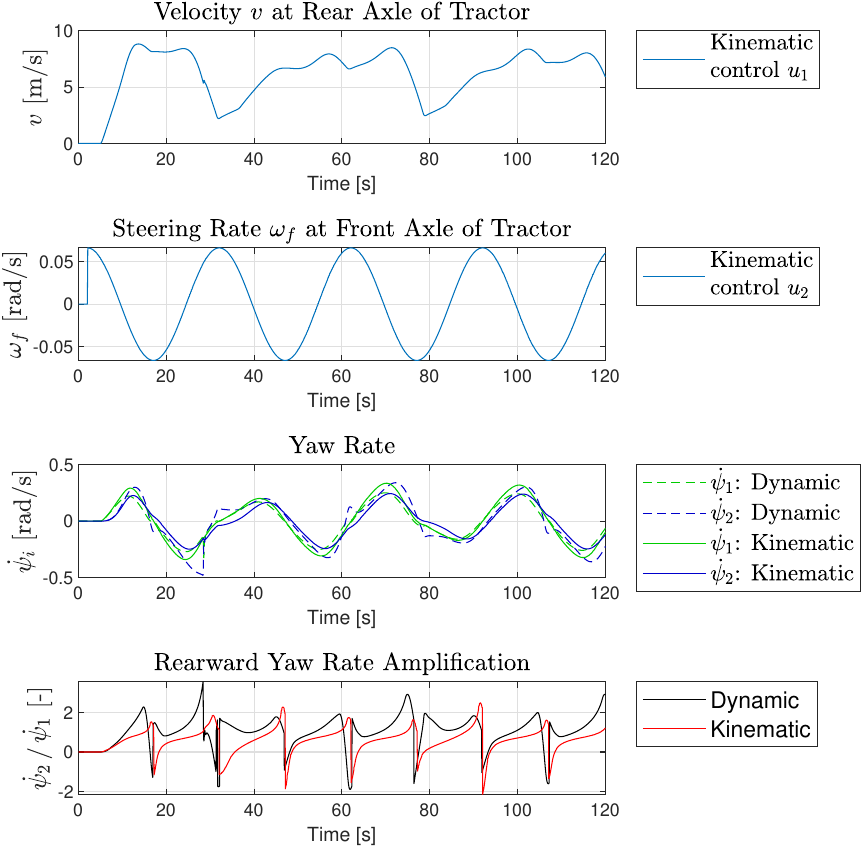}
    \caption{Dynamic model vs Kinematic model: effect of actuator delays: Torque actuator delay (5 seconds), Steering actuator delay (2 seconds) largely do not affect kinematic model's performance if the sensors that measure kinematic controls are accurate.}
    \label{fig:Dyn_vs_Kin_delay}
\end{figure}

Figure \ref{fig:Dyn_vs_Kin_delay} shows the effect of actuator delay on the kinematic model's yaw rate response estimate. In this test, the axle torque command to the dynamic model was delayed by 5 seconds, while the steering angle command to the dynamic model was delayed by 2 seconds. Despite the actuator delay, the kinematic model follows the dynamic model's yaw rate response. This can be attributed to the fact that the kinematic model uses sensors that measure the dynamic model's states (velocity and steering rate), which get affected by the actuator delay. However, the sensors being accurate in this test results in no loss of reliability of the kinematic model due to actuator delay.

The dynamic model's hitch joint is applied rotational friction using the following parameters: Coulomb friction torque (Case 1: No friction, Case 2: 450 Nm and Case 3: 2000 Nm). The breakaway friction torque is fixed at 1.2 times the Coulomb friction torque. The viscous friction coefficient is 1 Nm/rad/s for all cases. Due to joint friction, it is observed in Figure \ref{fig:Dyn_vs_Kin_friction} that the dynamic model's rearward yaw rate amplification is dampened if the friction is high. However, the kinematic model completely ignores joint friction. As a result, such models are not suitable for control design when the plant is affected by uncertainties such as hitch joint rotational friction.
\begin{figure}
    \centering
    \begin{subfigure}[b]{0.9\linewidth}
        \centering
        \includegraphics[width=0.9\linewidth]{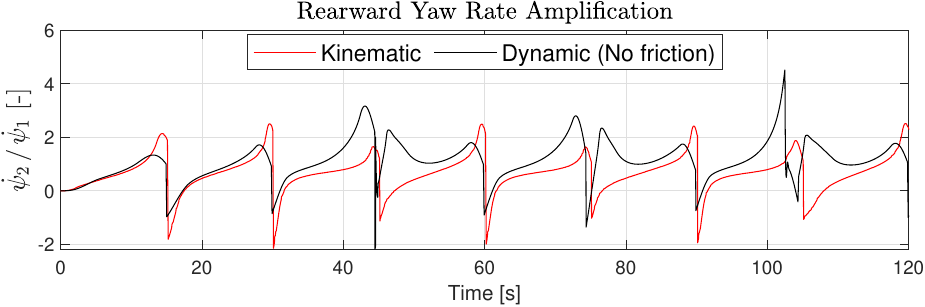}
        \label{fig:rwa_noFriction}
    \end{subfigure}
    \hfill
    \begin{subfigure}[b]{0.9\linewidth}
        \centering
        \includegraphics[width=0.9\linewidth]{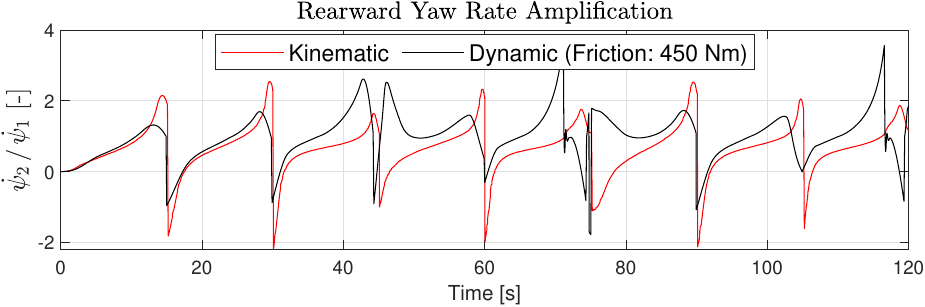}
        \label{fig:rwa_Friction_450}
    \end{subfigure}
    \hfill
    \begin{subfigure}[b]{0.9\linewidth}
        \centering
        \includegraphics[width=0.9\linewidth]{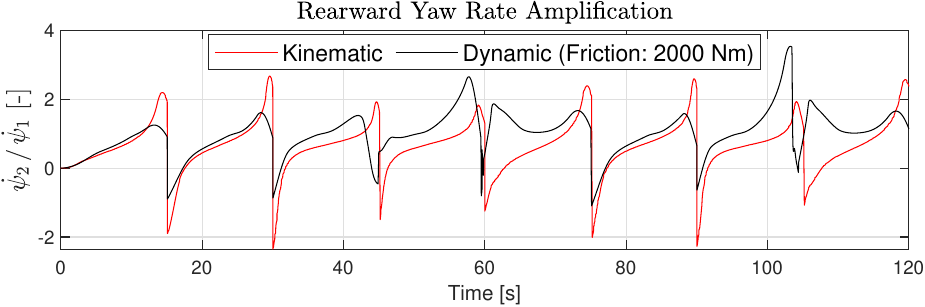}
        \label{fig:rwa_Friction_2000}
    \end{subfigure}
    \caption{Dynamic model vs Kinematic model: Impact of hitch joint friction. The kinematic model completely disregards joint friction, while the dynamic model's rearward yaw rate amplification is dampened due to higher Coulomb friction.}
    \label{fig:Dyn_vs_Kin_friction}
\end{figure}

\subsection{Kinematic Behavior of Common Vehicle Configurations}
\begin{figure}
    \centering
    \includegraphics[width=0.9\linewidth]{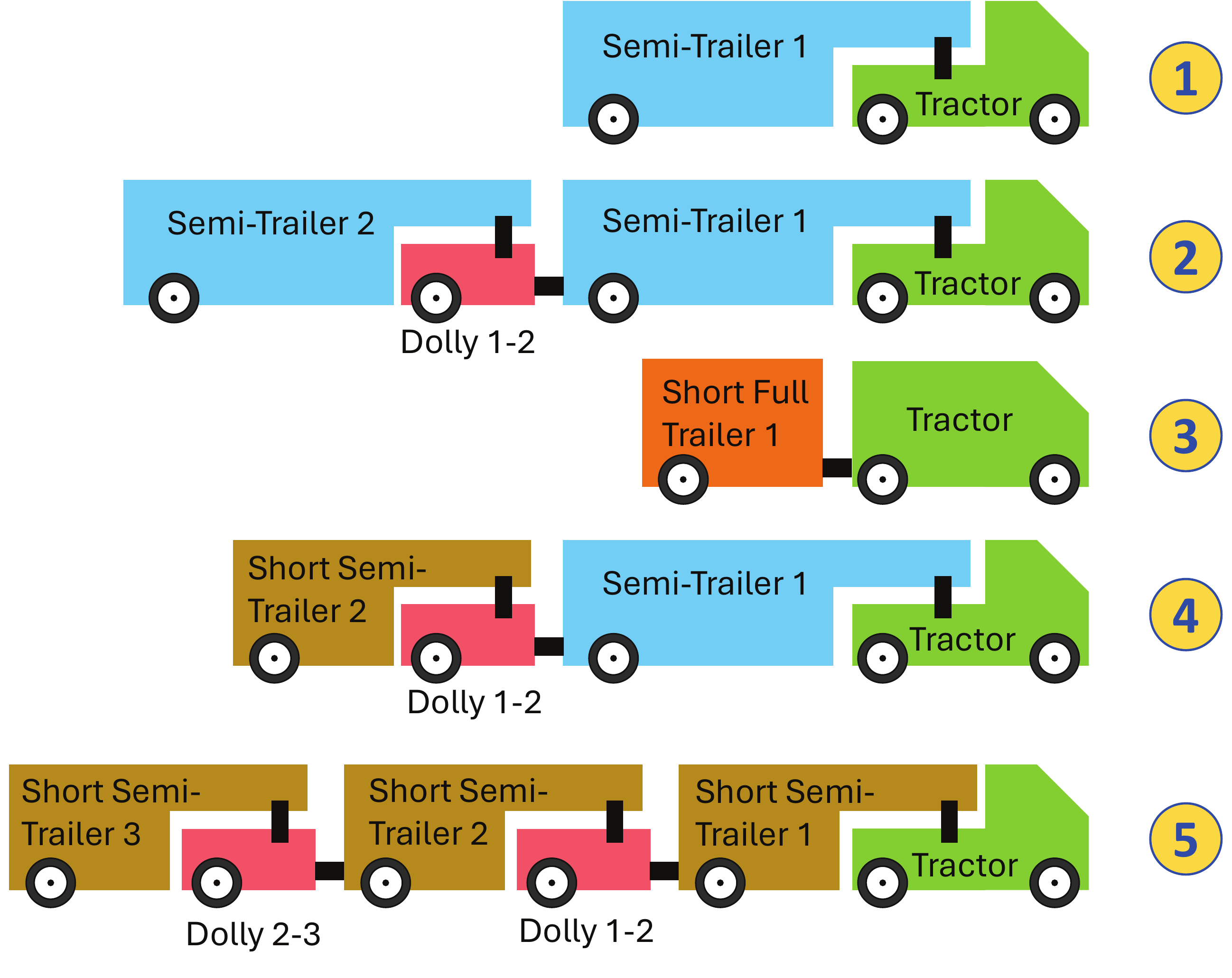}
    \caption{Common articulated vehicle configurations}
    \label{fig:configs}
\end{figure}

Some commonly used configurations of articulated vehicles are shown in Figure \ref{fig:configs}. Realistically, semi-trailers are connected to each other in multi-trailer configurations by the using converter dollies. Hence, a converter dolly must be treated as another rigid vehicle unit within the articulated vehicle system. The four configurations tested on a four-way intersection scenario are described as follows:
\begin{itemize}
    \item Configuration 1 is a system in which the tractor unit tows one semi-trailer.
    \item Configuration 2 is a system in which the tractor unit tows two semi-trailers connected to each other by a dolly unit.
    \item Configuration 3 is a system in which the tractor tows a short full trailer.
    \item Configuration 4 is a system in which the tractor unit tows one long and one short semi-trailer connected to each other by a dolly unit.
    \item Configuration 5 is a system in which the tractor unit tows three short semi-trailers connected to each other by dolly units.
\end{itemize}

All semi-trailers are connected to the vehicle unit in front by a hitch located 0.5 m in front of the rearmost axle of the unit in front, while all full trailers are connected to the unit in front through a hitch located 0.5 m behind the vehicle unit in front.  

\begin{figure}
    \centering
    \begin{subfigure}[b]{\linewidth}
        \centering
        \includegraphics[width=\linewidth]{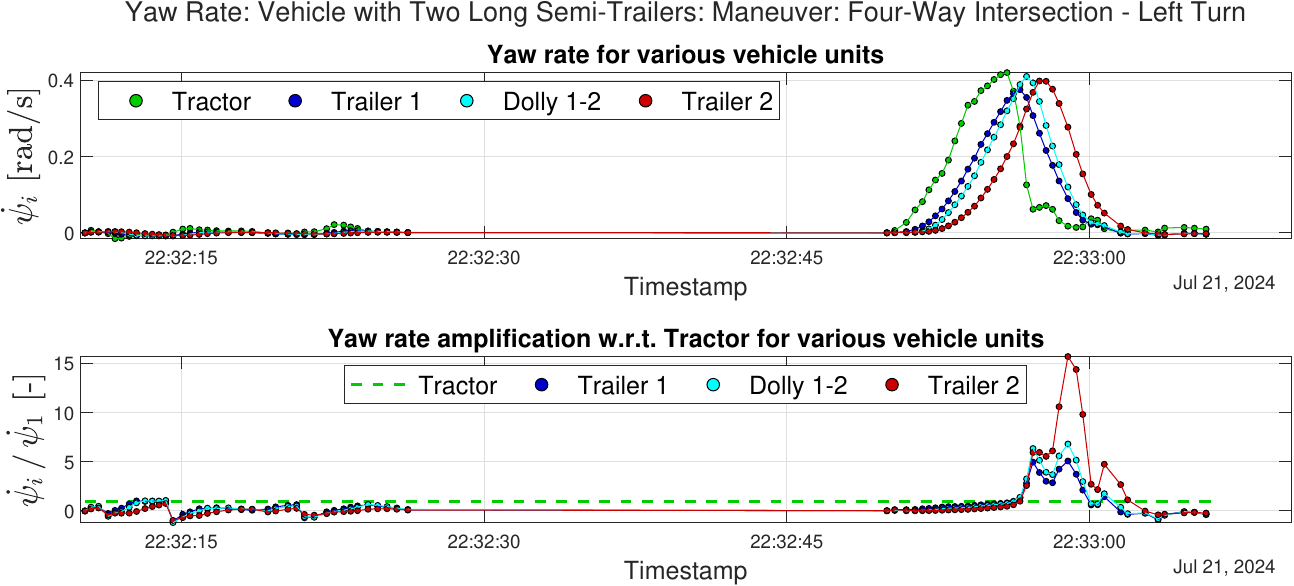}
        \caption{Configuration 2: Tractor pulling two semi-trailers}
        \label{fig:yawAmp_2T_sc1_config2}
    \end{subfigure}
    \hfill
    \vskip 0.01in
    \begin{subfigure}[b]{\linewidth}
        \centering
        \includegraphics[width=\linewidth]{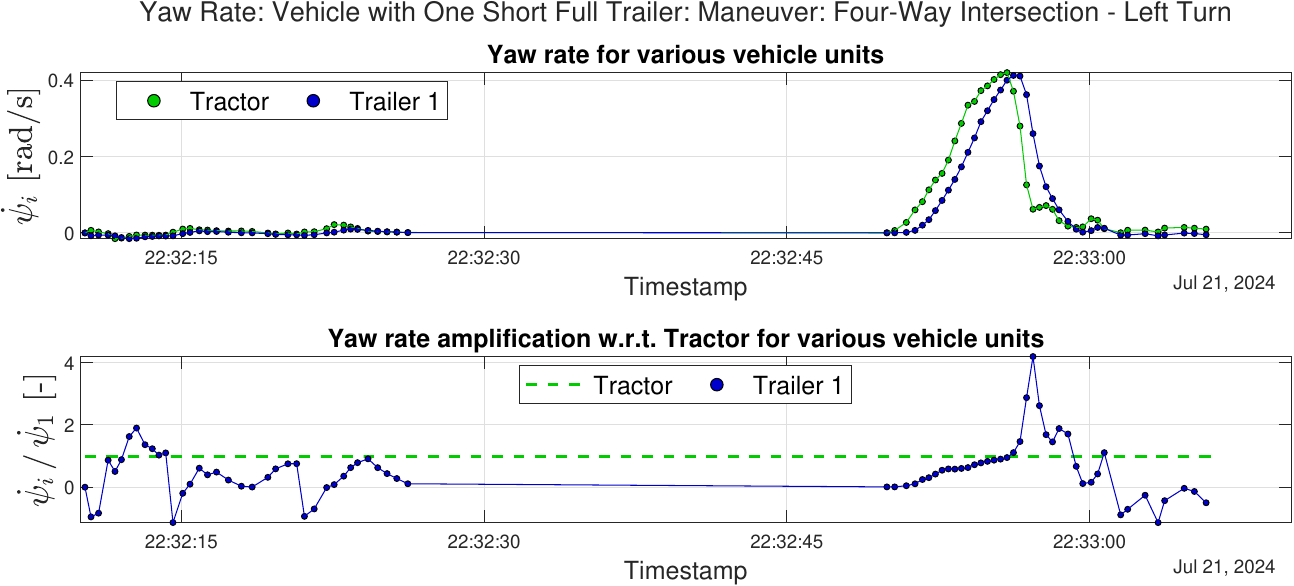}
        \caption{Configuration 3: Tractor pulling one short full trailer}
        \label{fig:yawAmp_1T_Short_sc1_config3}
    \end{subfigure}
    \hfill
    \vskip 0.01in
    \begin{subfigure}[b]{\linewidth}
        \centering
        \includegraphics[width=\linewidth]{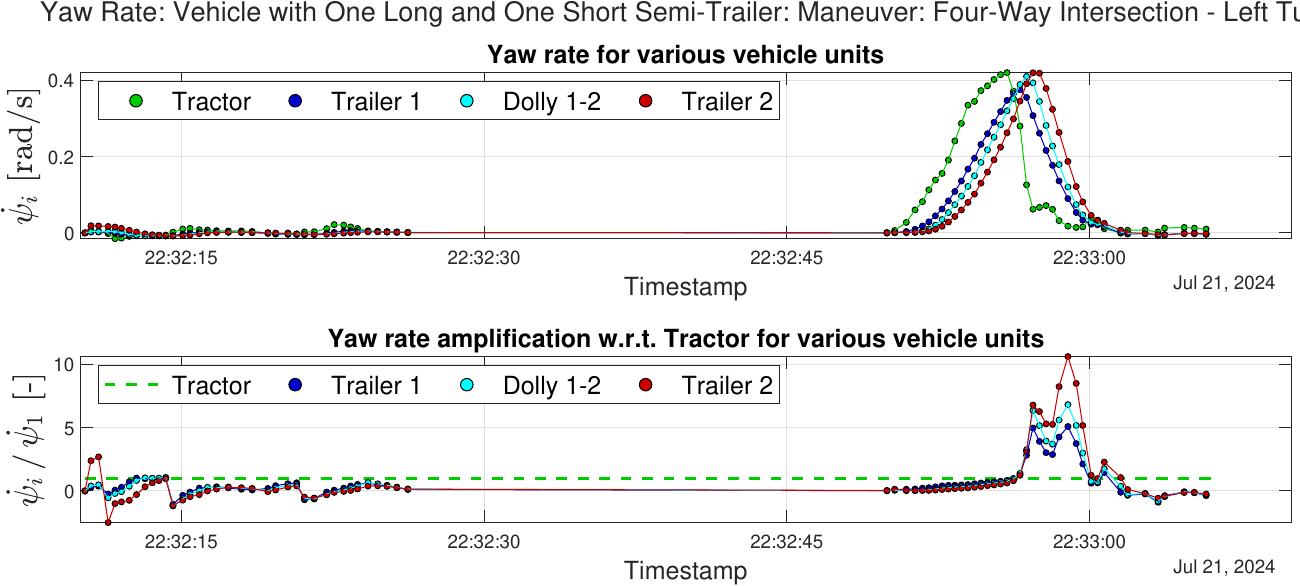}
        \caption{Configuration 4: Tractor pulling one long and one short semi-trailer}
        \label{fig:yawAmp_2T_LongShort_sc1_config4}
    \end{subfigure}
    \hfill
    \vskip 0.01in
    \begin{subfigure}[b]{\linewidth}
        \centering
        \includegraphics[width=\linewidth]{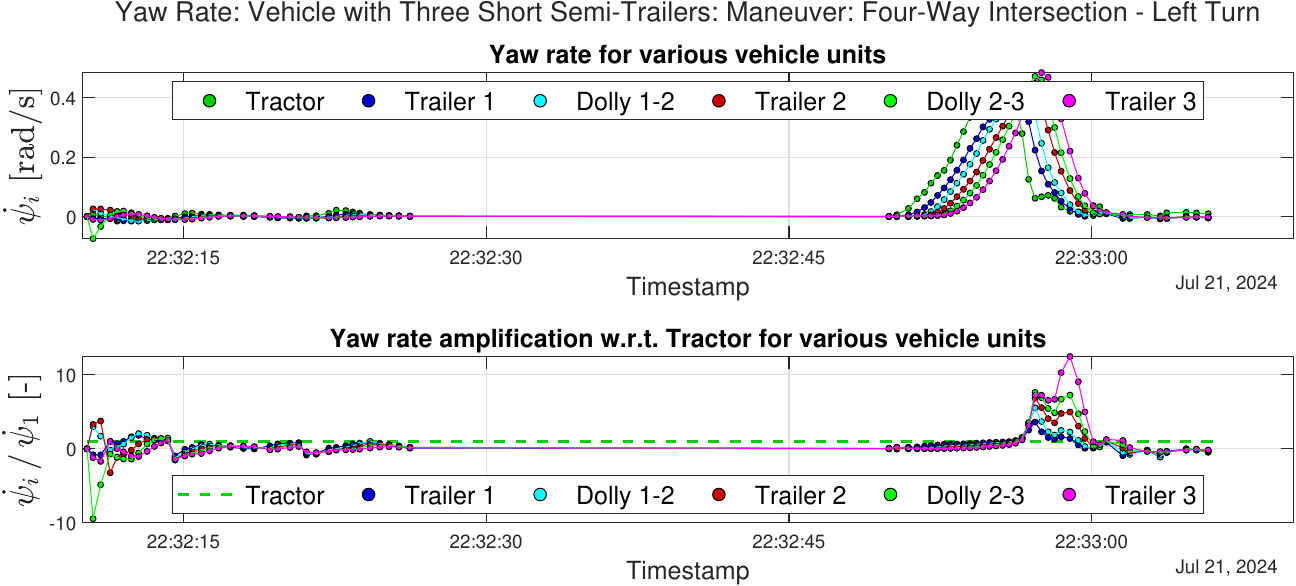}
        \caption{Configuration 5: Tractor pulling three semi-trailers}
        \label{fig:yawAmp_3T_Short_sc1_config5}
    \end{subfigure}
    \caption{Yaw rate response results for various articulated vehicle configurations having consecutive trailers connected by dollies: Four-Way Intersection}
    \label{fig:results_configs}
\end{figure}

The yaw rate response for these configurations on the left turn scenario is shown in Figure \ref{fig:results_configs}. As expected, the rearward yaw rate amplification increases for trailer units relatively behind in the articulated vehicle chain as observed in Configuration 2 (two semi-trailer). Interstingly, the maximum magnitude of rearward yaw rate amplification is comparable for Configuration 5 (three short semi-trailers) and Configuration 4 (one long, one short semi-trailer) indicating that trailers of shorter length experience overall lower rearward amplification. Moreover, on comparing Configuration 3 (short full trailer) with Configuration 1 (one semi-trailer base configuration in Figure \ref{fig:stateEvolution_sc1}), it is obeserved that while the yaw rates are comparable, the shorter full trailer has lower rearward amplification.

\subsection{Kinematic Behavior on Various Road Geometries}
\begin{figure}
    \centering
    \begin{subfigure}[b]{0.9\linewidth}
        \centering
        \includegraphics[width=0.8\linewidth]{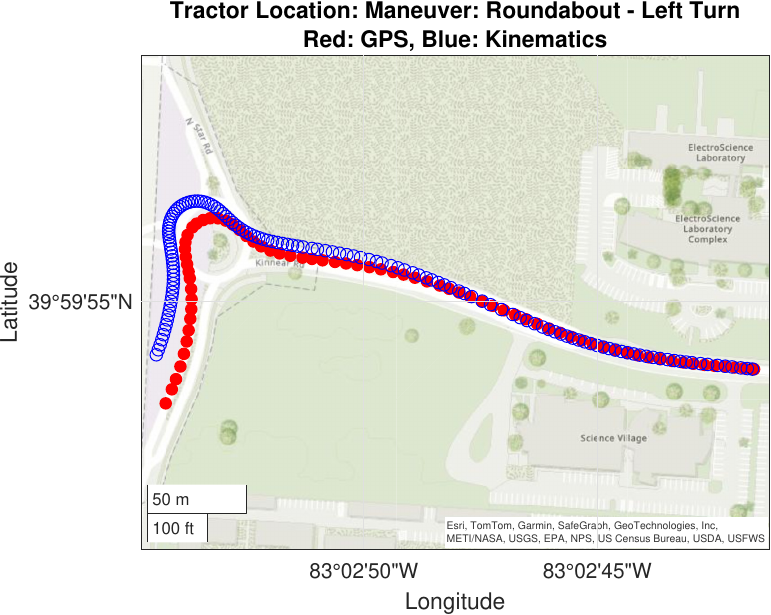}
        \caption{Scenario 1: Roundabout}
        \label{fig:tracPos_SUV_sc6}
    \end{subfigure}
    \hfill
    \vskip 0.02in
    \begin{subfigure}[b]{0.9\linewidth}
        \centering
        \includegraphics[width=0.8\linewidth]{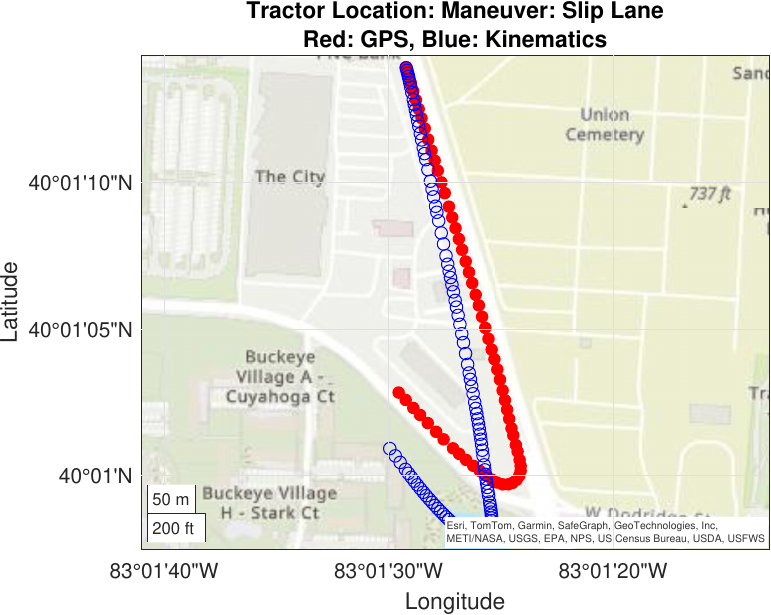}
        \caption{Scenario 2: Slip lane}
        \label{fig:tracPos_SUV_sc8}
    \end{subfigure}
    \hfill
    \vskip 0.02in
    \begin{subfigure}[b]{0.9\linewidth}
        \centering
        \includegraphics[width=0.8\linewidth]{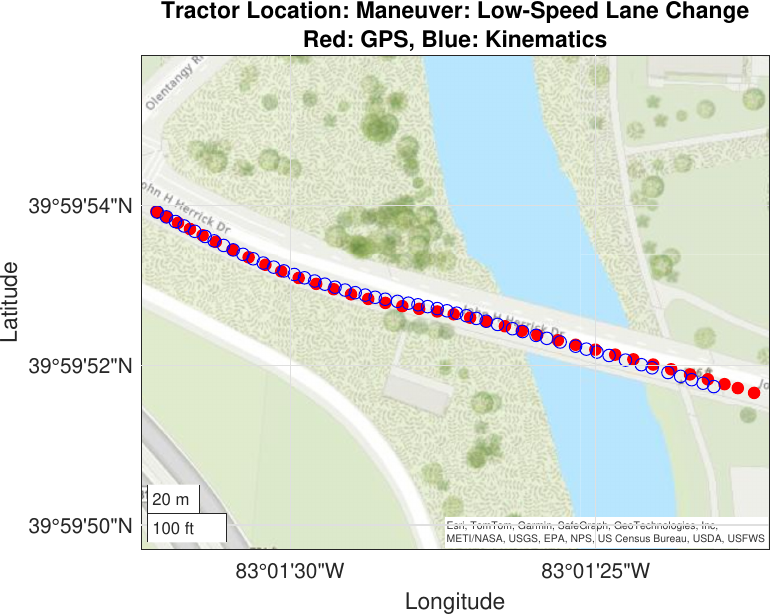}
        \caption{Scenario 3: Low-speed lane change}
        \label{fig:tracPos_SUV_sc9}
    \end{subfigure}
    \hfill
    \vskip 0.02in
    \begin{subfigure}[b]{0.9\linewidth}
        \centering
        \includegraphics[width=0.8\linewidth]{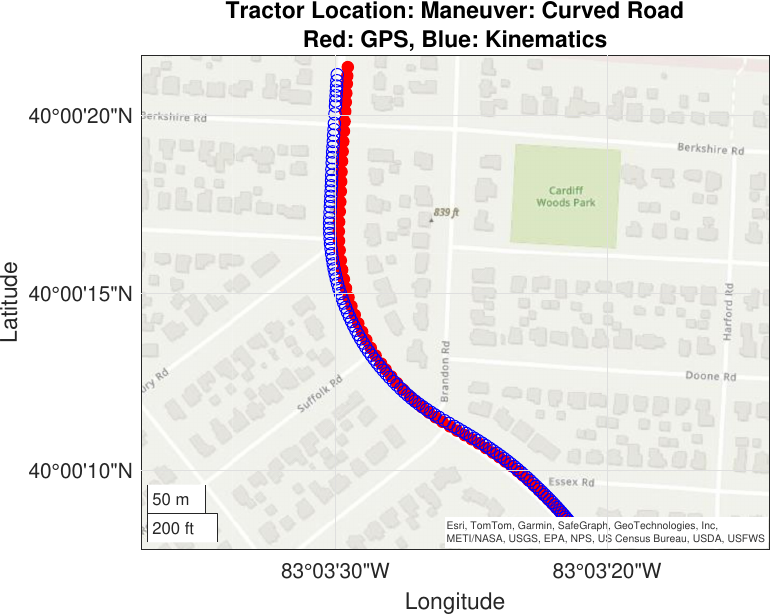}
        \caption{Scenario 4: Curved road}
        \label{fig:tracPos_SUV_sc10}
    \end{subfigure}
    \caption{Various real world scenarios tested}
    \label{fig:scenarios}
\end{figure}

The yaw rate response of kinematic model is further observed on driving data collected from four more scenarios as shown in Figure \ref{fig:scenarios}. Scenario 1 is a roundabout that is meant to test the trailer's yaw rate response on a constant curvature road. Scenario 2 is an aggressive slip lane maneuver. Scenario 3 is a low-speed lane change maneuver. Scenario 4 takes place on a curved road, and is meant to test the trailer's response to smoothly changing road geometries. 

\begin{figure}
    \centering
    \begin{subfigure}[b]{\linewidth}
        \centering
        \includegraphics[width=\linewidth]{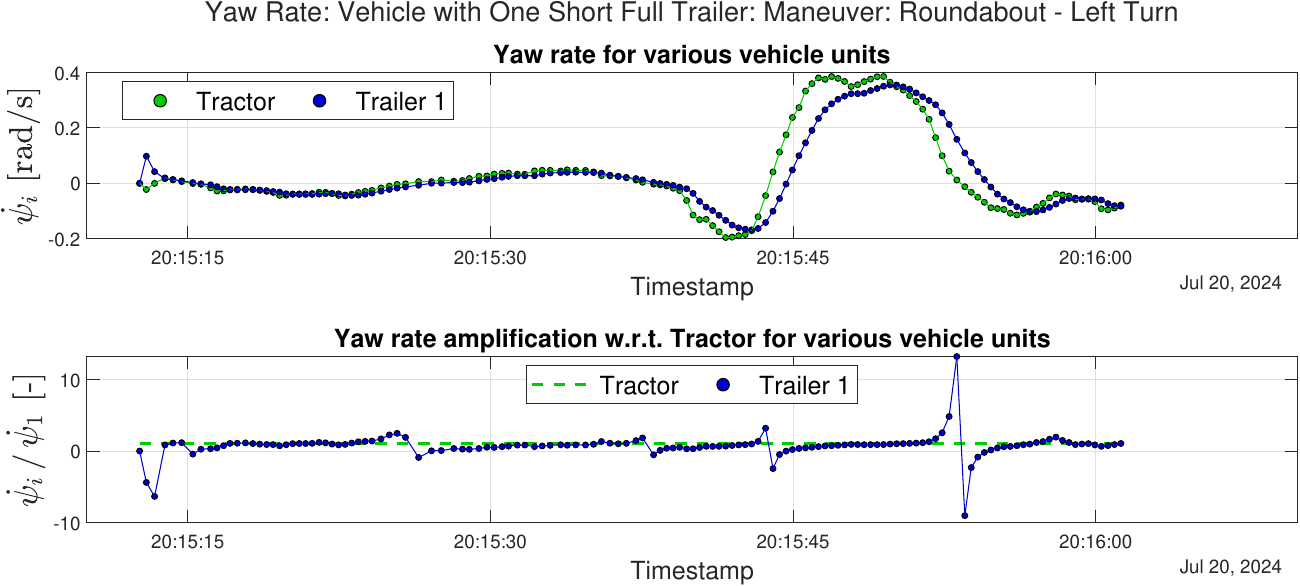}
        \caption{Scenario 1: Roundabout}
        \label{fig:yawAmp_1T_Short_sc6}
    \end{subfigure}
    \hfill
    \vskip 0.02in
    \begin{subfigure}[b]{\linewidth}
        \centering
        \includegraphics[width=\linewidth]{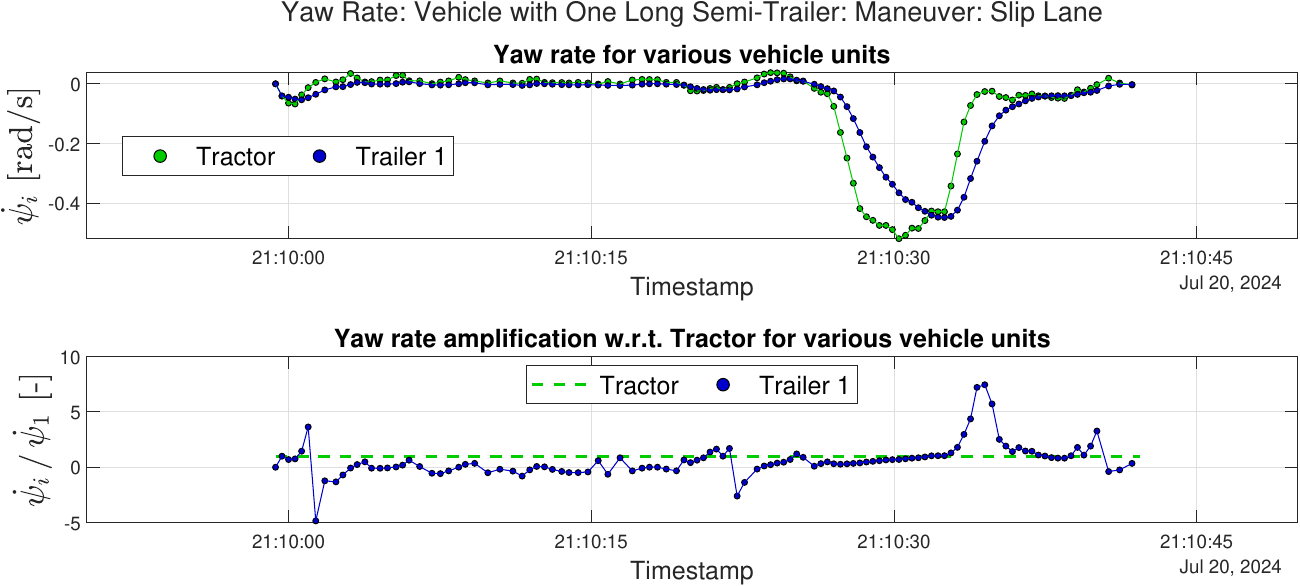}
        \caption{Scenario 2: Slip lane}
        \label{fig:yawAmp_1T_sc8}
    \end{subfigure}
    \hfill
    \vskip 0.02in
    \begin{subfigure}[b]{\linewidth}
        \centering
        \includegraphics[width=\linewidth]{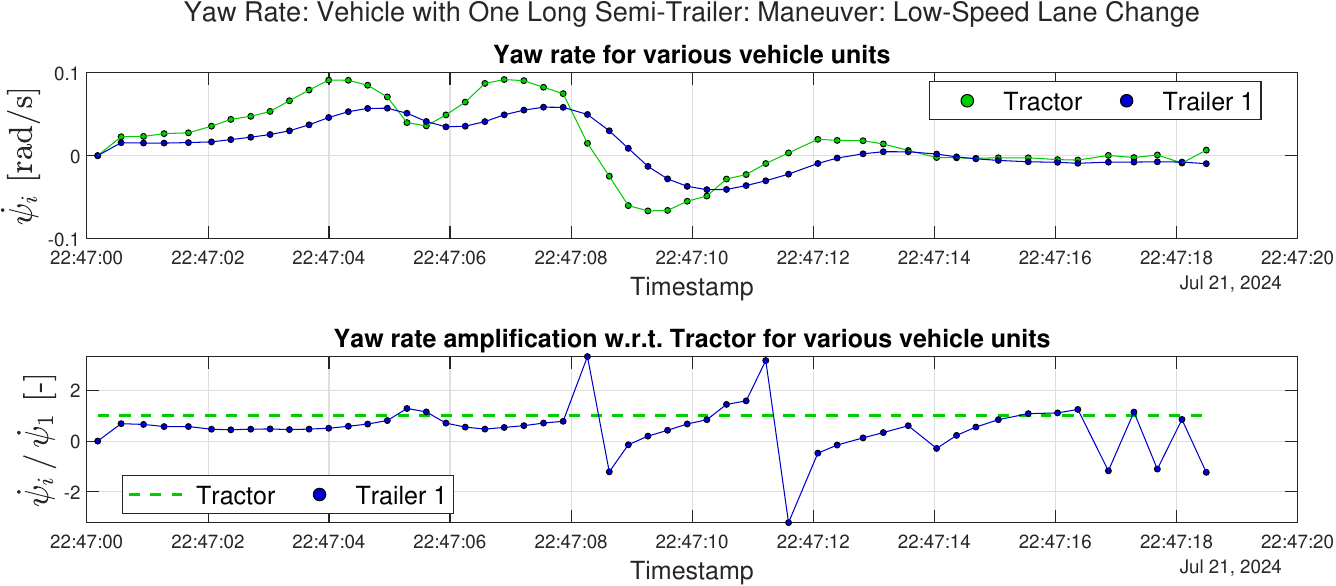}
        \caption{Scenario 3: Low-speed lane change}
        \label{fig:yawAmp_1T_sc9}
    \end{subfigure}
    \hfill
    \vskip 0.02in
    \begin{subfigure}[b]{\linewidth}
        \centering
        \includegraphics[width=\linewidth]{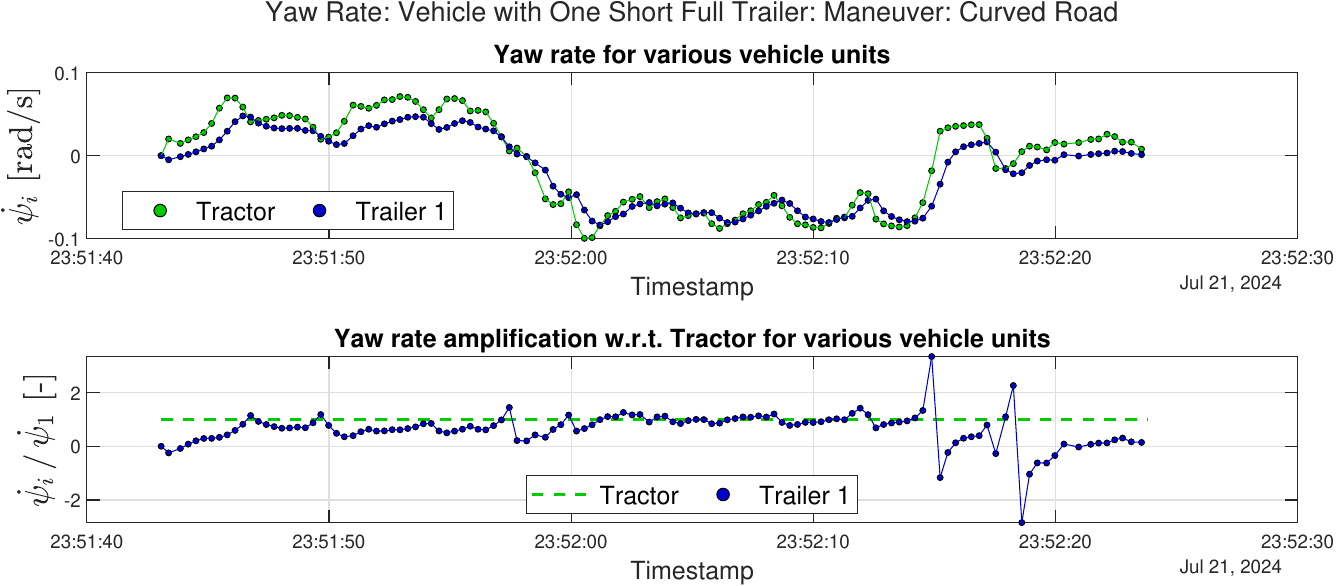}
        \caption{Scenario 4: Curved road}
        \label{fig:yawAmp_1T_Short_sc10}
    \end{subfigure}
    \caption{Yaw rate and yaw rate amplification of a one-trailer for various real world scenarios tested}
    \label{fig:scenarios_yawAmp}
\end{figure}

The yaw rate response at these scenarios is shown in Figure \ref{fig:scenarios_yawAmp}. It is observed that since the roundabout is a slow maneuver, the rearward amplification doesn't spike until the driver executes an aggressive exit out of the roundabout. Slip lane maneuver, being executed as an aggressive maneuver results in very prominent lag between tractor and trailer yaw rate, thus leading to high rearward amplification. However, it also observed that smoothly executed maneuvers such as the low-speed lane change and smooth curved road do not lead to spikes in yaw rate as well as rearward amplification.

\section{Discussion}
A symbolic algorithm to derive forward kinematic models for exotic articulated vehicle configurations was developed in this paper. These kinematic models are derived using the algorithm in polynomial computational complexity with the constraint matrix construction executing with $\mathcal{O}\left(n^2K\right)$, and the Kernel computation executing with $\mathcal{O}\left(n^2\right)$ complexity, where $n$ is the number of vehicle units and $K$ is the worst case number of wheels.

As observed in various experimental and simulation studies demonstrated in this papers, a progressive exacerbation of rearward yaw rate amplification from the first trailer unit to the rearmost trailer unit was observed. The kinematic models also demonstrate robustness against sensor noise and actuator delays. However, these models are not representative of unmodeled plant uncertainties such as hitch joint friction. Moreover, the kinematic models only consider planar rigid bodies moving in the 2-D plane, thus completely ignoring rollover behavior - an important safety consideration for articulated vehicle control. 

However, despite these limitations, the computational simplicity offered by these models make them well-suited for predictive control applications \cite{oliviera}. Moreover, despite their limitations, rearward yaw rate amplification provides a useful indication of rollover behavior for control design purposes. 

\subsection{Use Cases for these Models}
The following use cases of these models for control applications are noted:
\begin{enumerate}
    \item Offtracking control: Offtracking being a nonlinear phenomena related to the trajectory followed by each vehicle unit and wheel, generally utilizes control-affine nonlinear first-order kinematic models to develop controllers \cite{altafini_reduced_offtracking}. Recently, in the literature, kinematic models have been used to develop trajectory optimization method for offtracking control \cite{oliviera}. The modeling approach proposed in this paper aims to offer tools for extending these controllers to multi-trailer vehicles.
    \item Motion planning with directional stability and rollover considerations: These models provide a reliable indicator of rearward amplification and rollover under low-slip conditions. However, such motion planners must be accompanied by dedicated stability controllers that consider the high-slip tire behavior into account.
\end{enumerate}

\subsection{Controllability of Multi-Axle n-Trailer Vehicles}
For motion planning controller development, controllability is indeed a very relevant concern \cite{bullo_geometric_control}. It is known from prior studies \cite{general_n_trailer_properties} that the general n-trailer is small-time locally controllable. In the section on Generalized Ackermann law, we determine that the steering angles corresponding to rows eliminated from kernel computation for multi-axle units are indeed determined algebraically and uniquely, given that the independently controllable steering angles are known. Thus, this algebraic law for dependent steering actuators reduces the multi-axle system to the general n-trailer (Altafini \cite{general_n_trailer_properties}) from a strictly kinematic motion planning perspective. Hence, controllability is guaranteed, provided the kinematic control $u$ is continuous and smooth.

\section{Conclusions}
In this paper, an iterative algorithm is introduced to symbolically derive kinematic models for $n$-trailer multi-axle articulated vehicles using standard tools from Lie Group theory. The nonholonomic constraint structure for the system results in a $n$-trailer generalization of the Ackermann steering law, thus limiting the number of independently controllable wheels on each vehicle unit. The rearward yaw rate amplification behavior of the vehicle system is partially experimentally validated using real-world data from a test vehicle, where it is observed that adding additional trailers exacerbates the rearward amplification due to lag between yaw rates of consecutive vehicle units. Additionally, the robustness of the model against sensor noise, actuator delay and joint friction is studied and compared against a highly detailed dynamic model of an articulated vehicle. The symbolic algorithm presented in this paper is ideal for developing control-oriented first-order kinematic models for objectives such as trajectory optimization, offtracking control, and for motion planning with yaw rate response considerations. The kinematic models are found to be generally less realistic under high-slip conditions and their predictive capability is adversely affected due to plant uncertainties such as joint friction. However, due to their computational simplicity, these models are suitable for high-level supervisory control tasks for vehicle systems with multiple trailers.

\section*{Acknowledgments}
The authors would like to thank Dr. Ryan Chladny and Dr. Fengchen (Felix) Wang at The MathWorks Inc., and Dr. Andrea Serrani at The Ohio State University for their support and advice. 
We also thank Nur Uddin Javed and Shengzhe Tan at The Ohio State University for helping with data collection from the experimental vehicle.

\appendix

\bibliographystyle{elsarticle-num} 
\bibliography{refs_mechatronics2025}

\end{document}